%% file: main.tex
\documentclass[]{custom}
\usepackage{float}
\usepackage{graphicx}
\usepackage{titletoc}

\usepackage[english]{babel}
\usepackage{xcolor}
\usepackage{color}
\usepackage{wrapfig}
\usepackage[table]{xcolor}

\usepackage{amsmath}
\usepackage{amssymb}
\usepackage{amsthm}
\usepackage{graphicx}
\usepackage{booktabs}      
\usepackage{array}
\usepackage{multirow}      
\usepackage{caption}  
\usepackage{amsmath}
\usepackage[table]{xcolor}  
\usepackage{xcolor}
\usepackage{microtype}
\usepackage{parskip}

\definecolor{bonebg}{HTML}{F7F3EC}
\definecolor{accentgreen}{HTML}{6E8F80}
\definecolor{juniper}{HTML}{4D6A5D}
\definecolor{mist}{HTML}{EEF1EF}
\definecolor{panelborder}{HTML}{A9B3AE}
\definecolor{muted}{HTML}{666666}
\definecolor{badred}{HTML}{C83A2A}

\newtcolorbox{figurepanel}{
    enhanced,
    colback=mist,
    colframe=panelborder,
    boxrule=0.8pt,
    arc=6pt,
    left=10pt,right=10pt,top=10pt,bottom=10pt
}

\newtcolorbox{samplebox}{
    enhanced,
    colback=accentgreen!4!bonebg!60!white,
    colframe=accentgreen!55!black,
    boxrule=0.6pt,
    arc=4pt,
    left=8pt,right=8pt,top=7pt,bottom=7pt
}

\newcommand{\nfeheader}[1]{%
    {\sffamily\bfseries\large\color{juniper} #1}
}
\newcommand{\methodtitle}[1]{%
    {\bfseries #1}
}
\newcommand{\metrics}[2]{%
    {\color{muted} Gen.\ PPL: \textbf{#1} \\
    \hfill Entropy: \textbf{#2}}
}

\usepackage{algorithm}
\usepackage{algpseudocode}

\include{math_commands}

\usepackage{thmtools}
\usepackage{thm-restate}

\newtcolorbox{propbox}{
    enhanced,
    colback=accentgreen!4!bonebg!60!white,      
    colframe=accentgreen, 
    boxrule=0pt,
    leftrule=5pt,      
    arc=0pt,            
    left=8pt, right=8pt, top=8pt, bottom=8pt,
    fonttitle=\bfseries\sffamily,
    coltitle=juniper
}

\title{\fontsize{0.69cm}{5.5cm}\selectfont Discrete Flow Maps}

\author[1,2]{Peter Potaptchik}
\author[3]{\, Jason Yim}
\author[2]{\, Adhi Saravanan}
\author[4]{\, Peter Holderrieth}
\author[5]{\\Eric Vanden-Eijnden}
\author[1,6]{\, Michael S. Albergo}

\affiliation[1]{Harvard University}
\affiliation[2]{University of Oxford}
\affiliation[3]{Independent}
\affiliation[4]{MIT}
\affiliation[5]{NYU}
\affiliation[6]{Kempner Institute}

\abstract{\input{sections/abstract}}
\begin{document}
\maketitle

\renewcommand{\footnoterule}{%
  \kern -3pt
  \hrule width \linewidth
  \kern 2.6pt
}
\vspace{-0.2cm}

\input{sections/body}

\bibliographystyle{acl_natbib.bst}
\bibliography{bibliography.bib}

\clearpage
\beginappendix
\startcontents[app]
\printcontents[app]{l}{1}{\setcounter{tocdepth}{2}}

\input{sections/appendix}

\end{document}

%% file: math_commands.tex
\usepackage{amsmath,amsfonts,bm}

\def\1{\bm{1}}

\DeclareMathAlphabet{\mathsfit}{\encodingdefault}{\sfdefault}{m}{sl}
\SetMathAlphabet{\mathsfit}{bold}{\encodingdefault}{\sfdefault}{bx}{n}

\newcommand{\KL}{D_{\mathrm{KL}}}



%% file: sections/abstract.tex
The sequential nature of autoregressive next-token prediction imposes a fundamental speed limit on large language models. While continuous flow models offer a path to parallel generation, they traditionally demand expensive iterative integration. Flow Maps bypass this bottleneck by compressing generative trajectories into single-step mappings, theoretically enabling the generation of full text sequences from noise in a single forward pass. However, standard formulations rely on Euclidean regression losses that are geometrically ill-suited for discrete data. In this work, we resolve this conflict with Discrete Flow Maps, a framework that reconciles trajectory compression with the geometry of the probability simplex. We recast standard flow map training for the discrete domain, aligning the training dynamics with the discrete nature of language. Empirically, this strict geometric alignment allows our method to surpass previous state-of-the-art results in discrete flow modeling.

%% file: sections/body.tex
\section{Introduction}

\begin{wrapfigure}[22]{r}{0.46\textwidth}
	\vspace{-1.1em}
	\centering
	\includegraphics[width=\linewidth]{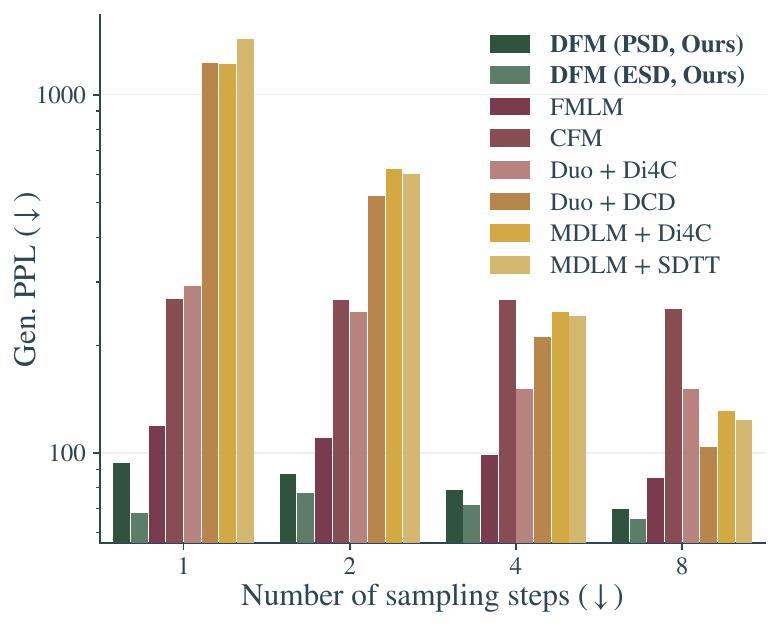}
	\caption{Generative perplexity as a function of the number of sampling steps on the LM1B dataset, comparing Discrete Flow Maps (DFM) with other accelerated methods. This highlights the ability of DFMs to generate higher-quality text with fewer sampling steps.}
	\label{fig:pareto}
\end{wrapfigure}

In just a few years, large language models (LLMs) \citep{vaswani2017attention, brown2020language,chowdhery2022palm,touvron2023llama} have become general-purpose engines for text, code, and multimodal reasoning —powering a variety of applications ranging from chat assistants and search to programming tools and scientific discovery. Yet, despite their remarkable practical impact, the field remains constrained by a structural bottleneck: the inherently sequential nature of next-token prediction. The dominant architecture, autoregressive (AR) modeling, generates text one step at a time. While this approach has scaled remarkably, it imposes a linear computational cost on generation, rendering long-form reasoning and real-time synthesis  expensive. Various powerful optimization techniques have been proposed---such as speculative decoding \citep{leviathan2023fast} and multi-token prediction \citep{gloeckle2024better}---aimed at extracting efficiency from the AR backbone. However, these methods remain strictly bound by the underlying serial nature of AR models. To fundamentally overcome this limitation, we look beyond the next-token prediction paradigm entirely.

Separately, diffusion models \citep{song2020score, ho2020denoising, sohldickstein2015deep} and flow matching \citep{albergo2022building, lipman2022flow,liu2022flow} have emerged as the leading approaches for generative synthesis in continuous domains. By modeling generation as the transformation of noise into data via differential equations, these frameworks offer a rigorous path toward non-autoregressive, parallel generation. Crucially, they unlock capabilities that AR models lack, such as precise test-time steering \citep{singhal2025general,uehara2025inference} and flexible guidance mechanisms \citep{chung2022diffusion}. To accelerate these models, recent techniques such as consistency models and flow maps \citep{boffi_flow_2024, boffi2025consistency} have emerged, learning to map any point on a generative trajectory directly to its endpoint, thereby compressing the iterative integration process into a single forward pass. The recent surge in such distillation methods suggests a tantalizing possibility: training flow maps on text to achieve massive speedups while retaining powerful control mechanisms.

However, a fundamental geometric mismatch stands in the way. Standard flow map objectives are designed for Euclidean space $\mathbb{R}^K$, relying on $L^2$ regression losses. Text, in contrast, is discrete: the natural object to predict is a probability distribution over a vocabulary, which lives on the probability simplex---not in Euclidean space. For such discrete data,  $L^2$ regression losses yield suboptimal performance compared to likelihood-based alternatives such as cross-entropy. Simply put, treating a probability distribution like a coordinate in Euclidean space is a fundamental misalignment; the geometry of the loss does not match the geometry of the data.

In this work, we resolve this conflict by systematically recasting continuous flow maps for discrete data. Rather than parameterizing flow maps in terms of Euclidean quantities, we reparameterize them in terms of the mean denoiser, an object that natively lives on the probability simplex. This lets us replace the Euclidean objectives used in standard flow map training with exact cross-entropy and KL divergence losses, which are natural for discrete data. This yields a geometrically consistent framework that also performs strongly in practice: it yields a cleaner formulation of discrete flow maps while preserving the ability to perform one and few-step generation. This geometric consistency is not merely aesthetic, but translates into stronger results, allowing us to surpass previous state-of-the-art performance in non-autoregressive language generation.

To summarize, we make the following contributions:
\begin{enumerate}
\item \textbf{Discrete Flow Maps:} We provide a paradigm for one or few-step non-autoregressive text generation by generalizing flow map models to discrete data. Remarkably, this reparameterization is fully defined by a \textit{mean denoiser} that natively lives on the probability simplex.
\item \textbf{Training objectives:} We exploit the above relation to derive cross-entropy and Kullback-Leibler (KL) divergence losses for any-step flow maps in terms of the mean denoiser. These relations open a wide design space of objective functions that we explore to best align with the geometry of the data.
\item \textbf{Experiments:} We show that Discrete Flow Maps enable substantial speedups for language modeling, supporting one and few-step generation with only minor performance degradation, while also allowing for test-time steering and guidance.
\end{enumerate}

\begin{figure}[t]
    \centering
    \includegraphics[width=1.0\linewidth]{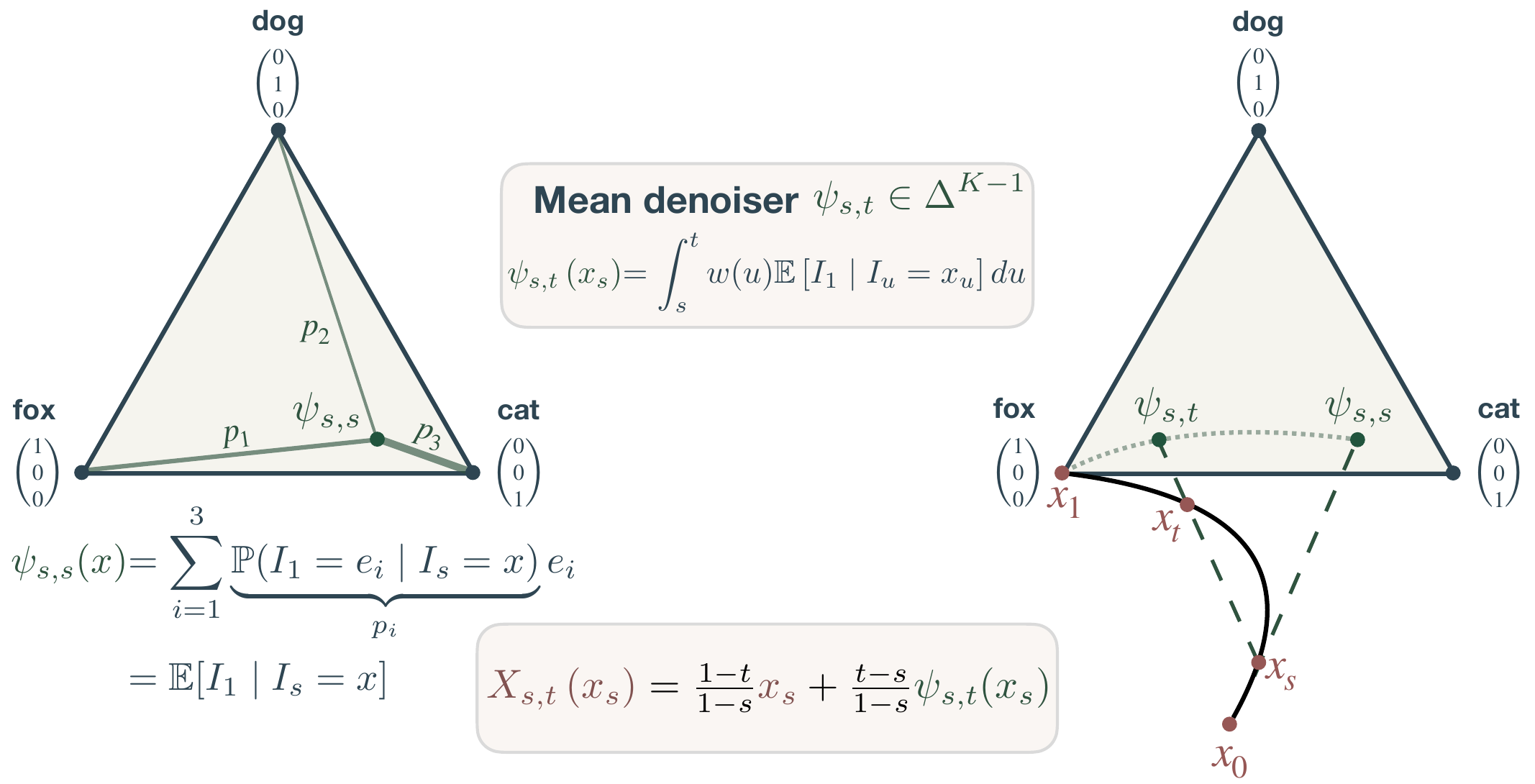}
    \caption{\textbf{Overview of the geometry of the discrete flow map.} \textbf{Left:} The instantaneous denoiser $\psi_{s,s}(x_s)$ at any time $s$ is always on the simplex by weighted sum. \textbf{Right:} By derivation of \eqref{eq:mean-denoiser}, the \textbf{mean denoiser} $\psi_{s,t}(x_s)$ is also always a projection onto the simplex, making possible a direct parameterization of the flow map $X_{s,t}$ that enables our cross-entropy loss functions.}
    \label{fig:placeholder}
\end{figure}

\section{Preliminaries}
\label{sec:prelim}

We begin by formalizing language modeling within the geometry of the probability simplex, alongside the standard autoregressive formulation. We then review continuous generative flows that transport probability mass from noise to data and admit efficient acceleration via flow maps that compress trajectories into single-step operators.

\subsection{Language Modeling on the Simplex}
\label{sec:lang_model}
Let $\mathcal{V} = \{e_1, \dots, e_K\} \subset \mathbb{R}^K$ be the set of standard basis vectors representing a finite vocabulary of size $K$. These vectors constitute the vertices of the $(K-1)$-dimensional probability simplex $\Delta^{K-1} := \{x \in \mathbb{R}^K : x \geq 0, \langle \mathbf{1}, x \rangle = 1\}$. Consequently, a sequence of discrete tokens of length $L$ can be represented as a matrix $\mathbf{x} = (x^1, \dots, x^L) \in \mathcal{V}^L$. The objective of language modeling is to learn a probability distribution supported on these discrete sequences $\mathcal{V}^L$ that approximates the true data distribution, which we denote by $p_1(\mathbf{x})$.

\paragraph{Autoregressive Modeling.}
The standard Autoregressive (AR) approach factorizes the joint distribution $p_1(\mathbf{x})$ into a product of conditional probabilities via the chain rule:
\begin{align}
\label{eq:ar_factorization}
    p_1(\mathbf{x}) = \prod_{\ell=1}^L p(x^{\ell} \mid x^{<\ell}),
\end{align}
where $x^{<\ell}$ denotes the tokens preceding position $\ell$. AR models parametrize these conditionals using a neural network that outputs a categorical distribution over $\mathcal{V}$ at each step. To learn these conditional probabilities, training minimizes the negative log-likelihood, or equivalently, the cross-entropy loss.

\paragraph{Cross-Entropy.}
This objective is fundamental to discrete generative modeling as it allows for learning a probability mass function over a finite vocabulary. Let $Y$ be a discrete random variable taking values in $\mathcal{V}  = \{e_1, \dots, e_K\}$, and let $X$ be a conditioning random variable taking values in a generic measurable space $\mathcal{X}$. We optimize over the space of measurable functions $f: \mathcal{X} \rightarrow \Delta^{K-1}$ to find the minimizer of the expected cross-entropy loss:
\begin{equation}
    \mathcal{L}(f) = \mathbb{E}_{X,Y} \left[ - \sum_{k=1}^{K} Y^{(k)} \log f^{(k)}(X) \right].
\end{equation}
The global minimizer $f^*$ of this objective is the \emph{conditional expectation} of the target:
\begin{equation}
    f^*(x) = \mathbb{E}[Y \mid X=x]
\end{equation}
Crucially, since $Y$ is a one-hot vector, its expectation corresponds exactly to the vector of class probabilities: $\mathbb{E}[Y^{(k)} \mid X=x] = \mathbb P(Y=e_k \mid X=x)$. Thus, minimizing cross-entropy directly recovers the true conditional probability mass function. In the context of AR modeling, identifying $X$ with the history $x^{<\ell}$ and $Y$ with the next token $x^{\ell}$, this recovers $p(x^{\ell} \mid x^{<\ell})$, the true conditional distribution of the next token given the previous ones. While cross-entropy provides a rigorous foundation for training discrete generative models, the standard AR framework is hampered by the sequential dependency in~\eqref{eq:ar_factorization}, necessitating slow serial generation. To overcome this computational bottleneck, we instead turn to continuous generative flows, which we formalize next.

\subsection{Continuous Generative Flows}
Here, we model the data distribution via a neural transport map from a source distribution $p_0$ to the data distribution $p_1$, realized via an Ordinary Differential Equation (ODE) defined by a velocity field (drift) $b_t : \mathbb{R}^K \to \mathbb{R}^K$:
\begin{equation}
\label{eq:ode}
    \dot{x}_t = b_t(x_t), \qquad x_0 \sim p_0,
\end{equation}
constructed such that the trajectory endpoint $x_1$ is distributed as $p_1$. To learn this transport, we first specify the desired evolution of marginal densities using a \emph{stochastic interpolant}. We define a process $I_t$ that linearly interpolates between a noise sample $I_0 \sim p_0$ and a data sample $I_1 \sim p_1$:
\begin{align}
\label{eq:interpolant}
    I_t = (1-t)I_0 + t I_1.
\end{align}
Although we restrict ourselves to the linear interpolant in the main paper, our framework extends to a general class of interpolants (see Appendix~\ref{sec:proofs}). This process defines a time-dependent density $p_t := \text{Law}(I_t)$ connecting $p_0$ to $p_1$. We seek a vector field $b_t$ such that the marginal path of the ODE~\eqref{eq:ode} matches the interpolant (i.e., $x_t \sim p_t$). The optimal choice for $b_t$ is the conditional expectation of the interpolant's velocity:
\begin{equation}
    b_t(x) = \mathbb{E}[I_1 - I_0 \mid I_t = x].
\end{equation}
To learn this drift, we parameterize a neural network $\hat{b}_t: \mathbb{R}^K \to \mathbb{R}^K$ and minimize the flow matching objective \citep{albergo2022building,lipman2022flow, liu2022flow}:
\begin{equation}
    b = \operatorname*{argmin}_{\hat{b}} 
    \int_0^1 \mathbb{E} \left [ \left\| \hat{b}_t(I_t) - (I_1 - I_0)\right \|^2 \right]dt,
\end{equation}
where the expectation is taken over the interpolant process.

\subsection{Flow Maps and Trajectory Compression}
Solving~\eqref{eq:ode} during inference requires numerical integration, necessitating numerous evaluations of the neural drift $b_t$. To circumvent this bottleneck, methods such as Consistency Models and Flow Maps~\citep{song2023consistencymodels,boffi_flow_2024, sabour2025align} compress these continuous trajectories into single-step and few-step mappings.

By definition, the flow map $X_{s,t}: \mathbb{R}^K \to \mathbb{R}^K$ is the solution operator for the probability flow ODE~\eqref{eq:ode}. For any solution $(x_t)_{t \in [0,1]}$ of this ODE, the map satisfies:
\begin{align}
\label{eq:map}
    X_{s,t}(x_s) = x_t, \quad \forall s,t \in [0,1].
\end{align}
In essence, $X_{s,t}$ jumps directly between times $s$ and $t$ along the flow. For training, it is standard to parametrize the flow map in residual form via the average velocity $v_{s,t}(x)$:
\begin{align}
\label{eq:map:resid}
    X_{s,t}(x) = x + (t - s)\, v_{s,t}(x).
\end{align}
For the flow map to remain consistent with the underlying ODE dynamics, the average velocity must converge to the instantaneous drift as the time step vanishes. This is formalized as the \emph{tangent condition}~\citep{kim_consistency_2024}:
\begin{align}
\label{eq:tangent}
    \lim_{s \to t} \partial_t X_{s,t}(x) = v_{t,t}(x) = b_t(x).
\end{align}
We enforce this condition by training a neural parameterization $\hat{v}_{s,t}$ to match the interpolant's velocity along the diagonal $s=t$, yielding the standard diagonal loss:
\begin{equation}
\label{eq:flow_map_diag_loss}
    \mathcal{L}_{\text{diag}}(\hat{v})
    = \int_0^1 \mathbb{E} \left\| \hat{v}_{t,t}(I_t) - (I_1 - I_0) \right\|^2 dt.
\end{equation}
While the diagonal objective anchors the model $\hat v_{t,t}$ to the instantaneous drift $b_t$, it does not constrain the trajectory for distinct times $s \neq t$. To ensure the learned map forms a valid global trajectory, we must additionally enforce \emph{consistency constraints}. These can be expressed through three equivalent identities~\citep{boffi2025consistency}:
\begin{subequations}
\label{eq:map:rules}
\begin{align}
    \text{Semigroup:} \quad & X_{u,t}\big(X_{s,u}(x)\big) = X_{s,t}(x), \\
    \text{Lagrangian:} \quad & \partial_t X_{s,t}(x) = v_{t,t}\big(X_{s,t}(x)\big), \\
    \text{Eulerian:} \quad & \partial_s X_{s,t}(x) + v_{s,s}(x)\cdot \nabla X_{s,t}(x) = 0, 
\end{align}
\end{subequations}
for all $s, u, t \in [0,1]$. The semigroup rule enforces compositionality, ensuring that the direct transport from $s$ to $t$ is equivalent to the sequential transport through any intermediate time $u$. The Lagrangian rule dictates that the flow endpoint moves according to the instantaneous drift, while the Eulerian rule ensures invariance to the source time. 

To train the model, we employ consistency objectives that directly penalize violations of these identities. We formulate these losses as the squared residuals of the rules in~\eqref{eq:map:rules}:
\begin{align}
\label{eq:map:losses}       
    \mathcal{L}_{\text{PSD}}(\hat v) &= \iiint\limits_{0 \leq s \leq u \leq t \leq 1}\left \|\hat X_{u,t}\big(\hat X_{s,u}(x)\big) - \hat X_{s,t}(x)\right \|^2dsdudt, \\
    \mathcal{L}_{\text{LSD}}(\hat v) &= \iint\limits_{0 \leq s \leq t \leq 1} \left \|\partial_t \hat X_{s,t}(x) - \hat v_{t,t}\big(\hat X_{s,t}(x)\big)\right \|^2dsdt, \\
    \mathcal{L}_{\text{ESD}}(\hat v) &= \iint\limits_{0 \leq s \leq t \leq 1}\left\|\partial_s \hat X_{s,t}(x) + \hat v_{s,s}(x)\cdot \nabla \hat X_{s,t}(x) \right \|^2dsdt.  
\end{align} 
Our total loss is the sum of the diagonal loss and any of these consistency losses:
\begin{equation}
    \mathcal{L}_{\text{total}}(\hat v) = \mathcal{L}_{\text{diag}}(\hat v) + \mathcal{L}_{\text{cons}}(\hat v).
\end{equation}
We train the neural parametrized average velocity $\hat v$ by minimizing $\mathcal{L}_{\text{total}}$, with the minimizer yielding $\hat{v}_{s,t} = v_{s,t}$, and so $\hat X_{s,t}$ recovers the true flow map $X_{s,t}$ at optimality.

\section{Discrete Flow Maps}
\label{sec:discrete_flow_maps}
We now adapt the flow map framework to the discrete domain. For clarity, we formulate our method for distributions $p_1$ supported on the vocabulary $\mathcal{V} \subset \mathbb{R}^K$. The extension to sequences of length $L$ (i.e., distributions on $\mathcal{V}^L$) is immediate by applying these operations position-wise. Standard flow map objectives force discrete data into a Euclidean regression framework, minimizing $L^2$ errors that are geometrically ill-suited for probability distributions. In this work, we resolve this misalignment by grounding the entire flow map framework within the geometry of the probability simplex $\Delta^{K-1}$. We adopt a parametrization that naturally respects the simplex and consistency objectives based on cross-entropy and KL divergence.

\subsection{The Mean Denoiser Parametrization}

Standard flow maps parametrize the trajectory $X_{s,t}$ via the unconstrained average velocity $v_{s,t}: \mathbb{R}^K \to \mathbb{R}^K$. While effective in Euclidean space, this formulation ignores the geometry of discrete data: even if the target distribution $p_1$ is supported on the simplex, the velocity $v_{s,t}$ need not---it can take any value in $\mathbb{R}^K$. We instead seek to reparametrize the flow map in terms of an object that explicitly resides on the simplex. We achieve this by defining the flow via the \emph{mean denoiser} $\psi_{s,t}: \mathbb{R}^K \to \Delta^{K-1}$, related to the average velocity by:
\begin{equation}
\label{eq:mean_denoiser_param}
    v_{s,t}(x) = \frac{\psi_{s,t}(x) - x}{1-s}.
\end{equation}
Substituting this expression into the flow map update $X_{s,t}(x) = x + (t-s)v_{s,t}(x)$ yields the convex combination:
\begin{equation}
\label{eq:flow_map_and_psi_st}
    X_{s,t}(x) = \frac{1-t}{1-s}x + \frac{t-s}{1-s}\psi_{s,t}(x).
\end{equation}
Remarkably, $\psi_{s,t}$ is guaranteed to take values on the probability simplex. This follows from the  following characterization (see Appendix~\ref{subsec:proofs_mean_denoiser} for a proof):
\begin{propbox}
    \paragraph{Mean Denoiser.}
    \begin{restatable}{proposition*}{mean_denoiser}
    \label{prop:mean_denoiser}
    The mean denoiser $\psi_{s,t}(x)$ is the time-averaged conditional expectation of data:
    \begin{equation}
    \label{eq:mean-denoiser}
        \psi_{s,t}(x_s) = \int_s^t \, w(u)  \mathbb{E}[I_1 \mid I_u = x_u] \, du,
    \end{equation}
    where $(x_{\tau})_{\tau \in [s,t]}$ is a trajectory of the flow and $w(u) = \frac{(1-s)(1-t)}{(t-s)(1-u)^2}$ is a probability density on $[s,t]$.
    \end{restatable}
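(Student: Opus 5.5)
The plan is to turn the defining relation~\eqref{eq:flow_map_and_psi_st} into a statement about the trajectory $x_\tau$ and then integrate the ODE~\eqref{eq:ode} explicitly. First I would express the drift of the linear interpolant through the denoiser. From $I_t=(1-t)I_0+tI_1$ we get $I_0=(I_t-tI_1)/(1-t)$, and hence
\begin{equation*}
I_1-I_0=\frac{I_1-I_t}{1-t}.
\end{equation*}
Taking conditional expectations gives
\begin{equation*}
b_u(x)=\frac{D_u(x)-x}{1-u}, \qquad D_u(x):=\mathbb{E}[I_1\mid I_u=x].
\end{equation*}
Along a trajectory of the flow this reads $\dot x_u=(D_u(x_u)-x_u)/(1-u)$, for $u<1$.

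This is a linear ODE in $x_u$ with integrating factor $1/(1-u)$. Differentiating, I find
\begin{equation*}
\frac{d}{du}\Big[\frac{x_u}{1-u}\Big]=\frac{\dot x_u}{1-u}+\frac{x_u}{(1-u)^2}=\frac{D_u(x_u)}{(1-u)^2}.
\end{equation*}
Integrating from $s$ to $t$ gives
\begin{equation*}
\frac{x_t}{1-t}-\frac{x_s}{1-s}=\int_s^t\frac{D_u(x_u)}{(1-u)^2}\,du.
\end{equation*}
Next I would use $x_t=X_{s,t}(x_s)$ and substitute~\eqref{eq:flow_map_and_psi_st}. Dividing that relation by $1-t$ gives
\begin{equation*}
\frac{X_{s,t}(x_s)}{1-t}-\frac{x_s}{1-s}=\frac{t-s}{(1-s)(1-t)}\,\psi_{s,t}(x_s).
\end{equation*}
Equating the two expressions and solving for $\psi_{s,t}(x_s)$ yields exactly~\eqref{eq:mean-denoiser} with $w(u)=\frac{(1-s)(1-t)}{(t-s)(1-u)^2}$.

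It remains to check that $w$ is a probability density on $[s,t]$. Positivity is clear for $s<t<1$. For normalization,
\begin{equation*}
\int_s^t (1-u)^{-2}\,du=\frac{1}{1-t}-\frac{1}{1-s}=\frac{t-s}{(1-s)(1-t)},
\end{equation*}
so $\int_s^t w(u)\,du=1$. Since each $D_u(x_u)$ is an expectation of a $\mathcal V$-valued variable, it lies in $\Delta^{K-1}$. By convexity, the $w$-average $\psi_{s,t}$ also lies in the simplex, which is the remark preceding the proposition.

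The main obstacle is not the algebra but the domain and edge cases. The case $t<s$ should be handled by orienting the integral consistently; the formula still holds, but $w$ is then a density only up to orientation, so I would state the density claim for $s<t$. The endpoint $t=1$ requires a limit: $w(u)\to(1-s)/(1-u)^2\cdot 0$ degenerates, and instead $\psi_{s,1}=X_{s,1}$ directly from~\eqref{eq:flow_map_and_psi_st}. The diagonal $s=t$ is recovered as the limit $\psi_{t,t}=D_t$. I would also note the regularity needed, namely that $b_u$ is locally Lipschitz so the trajectory exists and is unique, and that $D_u(x_u)$ is integrable on $[s,t]$; this is automatic because $D_u$ is bounded in the simplex.
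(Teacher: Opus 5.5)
Your proof is correct and follows the paper's argument. The paper works in the appendix with a general schedule $I_t=\alpha_t I_0+\beta_t I_1$: it writes the drift as $\ell_u x+\lambda_u\,\mathbb{E}[I_1\mid I_u=x]$, applies the integrating factor $1/\alpha_u$ (your $1/(1-u)$), integrates from $s$ to $t$, and normalizes against $X_{s,t}(x_s)=\Gamma_{s,t}x_s+\Xi_{s,t}\psi_{s,t}(x_s)$, which is exactly your computation in the linear case (using $D_u$ directly even spares you the paper's detour through the diagonal identity $\psi_{u,u}=\mathbb{E}[I_1\mid I_u]$).
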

\end{propbox}
Since $\mathbb{E}[I_1 \mid I_u]$ is an expectation of one-hot vectors, it always lies on the simplex. Consequently, $\psi_{s,t}$---as a weighted convex combination of such expectations---must also reside on the simplex.

\begin{wrapfigure}[15]{r}{0.38\textwidth}
    \centering
    \vspace{-8pt}
    \includegraphics[width=0.35\textwidth]{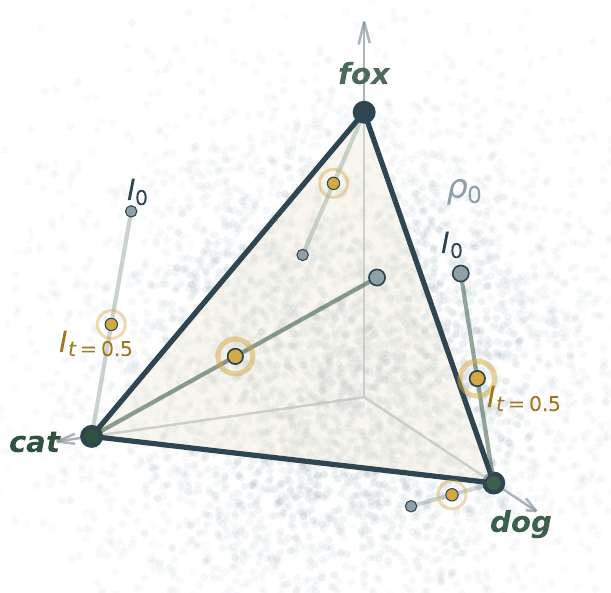}
    \vspace{-8pt}
    \caption{Interpolants with initial point $I_0 \in \mathbb R^K$ from $\rho_0$ randomly connecting to simplex vertices. }
    \label{fig:interpolant}
\end{wrapfigure}

Architecturally, we enforce this simplex constraint by parameterizing a neural network with unconstrained logits $\hat{z}_{s,t}: \mathbb{R}^K \to \mathbb{R}^K$ and defining $\hat{\psi}_{s,t}(x) = \text{Softmax}(\hat{z}_{s,t}(x))$. This ensures that the predicted target is always a valid distribution $\hat{\psi} \in \Delta^{K-1}$. We then parametrize our flow map $\hat X$ as
\begin{equation}
\label{eq:hat_flow_map_and_psi_st}
    \hat X_{s,t}(x) = \frac{1-t}{1-s}x + \frac{t-s}{1-s}\hat \psi_{s,t}(x).
\end{equation}

\paragraph{Extension to General Convex Sets.}
Although we focus on the probability simplex, this framework generalizes to any convex set. If the data distribution $p_{1}$ lies in $\Lambda\subset \mathbb{R}^K$, then the mean denoiser $\psi_{s,t}\in \text{ConvexHull}(\Lambda)$. In our case, $\Lambda=\{e_1,\cdots,e_{K}\}$ and $\text{ConvexHull}(\Lambda)=\Delta^{K-1}$. One can replace the Softmax with a suitable link function to ensure $\hat{\psi}_{s,t} \in \text{ConvexHull}(\Lambda)$ by design.

\subsection{Training Objectives}
We now turn to training $\hat{\psi}_{s,t}$. Since $\hat{\psi}_{s,t}$ is constrained to output valid probability distributions, the usual flow map training objectives—originally formulated as Euclidean regression losses—can be reformulated to act on distributions. This allows us to use cross-entropy and KL-divergence losses that respect the geometry of the simplex. As in standard flow map training, we employ two complementary objectives: a diagonal loss that anchors $\hat{\psi}_{t,t}$, and a consistency loss that enforces geometric validity of $\hat{\psi}_{s,t}$ for $s < t$.

\subsubsection{Diagonal Loss}
We begin with the following identity which states that the diagonal of the mean denoiser is the standard denoiser:
\begin{propbox}
    \paragraph{Diagonal Identity for $\psi_{s,t}$.}
    \begin{restatable}{proposition*}{diagonal_loss}
    \label{prop:diagonal}
    For any $t \in [0,1]$, the mean predictor satisfies:
    \begin{equation}
    \label{eq:md_diagonal}
        \psi_{t,t}(x) = \mathbb{E}[I_1 \mid I_t = x].
    \end{equation}
    \end{restatable}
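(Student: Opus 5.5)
The plan is to read off $\psi_{t,t}$ from the parametrization \eqref{eq:mean_denoiser_param} together with the tangent condition \eqref{eq:tangent}. Then I will compute the drift $b_t$ of the linear interpolant explicitly in terms of the denoiser $\mathbb{E}[I_1 \mid I_t = x]$.

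\textbf{Step 1: diagonal of the mean denoiser.} Setting $s=t$ in \eqref{eq:mean_denoiser_param} and using $v_{t,t}=b_t$ from \eqref{eq:tangent}, valid for $t<1$, gives
\begin{equation*}
\psi_{t,t}(x) = x + (1-t)\,b_t(x).
\end{equation*}
Here $\psi_{t,t}$ is understood as the limit $s\to t$ of $\psi_{s,t}$. This is consistent with the residual form \eqref{eq:map:resid}, since $\partial_t X_{s,t}|_{t=s}=v_{s,s}$.

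\textbf{Step 2: drift in terms of the denoiser.} From \eqref{eq:interpolant} we have $I_0 = (I_t - tI_1)/(1-t)$ for $t<1$. Hence
\begin{equation*}
I_1 - I_0 = \frac{I_1 - I_t}{1-t}.
\end{equation*}
Conditioning on $I_t = x$ yields
\begin{equation*}
b_t(x) = \frac{\mathbb{E}[I_1\mid I_t=x] - x}{1-t}.
\end{equation*}
Substituting this into Step 1 cancels the $x$ terms and gives $\psi_{t,t}(x)=\mathbb{E}[I_1\mid I_t=x]$.

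\textbf{Alternative route.} One could instead take $s\to t$ in Proposition~\ref{prop:mean_denoiser}. The weight $w(u)$ is a probability density on $[s,t]$ that concentrates at $u=t$, and the trajectory satisfies $x_u\to x$. Continuity of $u\mapsto \mathbb{E}[I_1\mid I_u=x_u]$ would then give the result directly.

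\textbf{Main obstacle.} The only delicate point is the endpoint $t=1$, where the formulas divide by $1-t$. I will handle it by continuity, taking the limit $t\to1$ in the identity already proved for $t<1$. Alternatively, one can argue that at $t=1$ the denoiser is the identity on the data support, which matches $\psi_{1,1}$ defined by that limit. Apart from this, every step is an algebraic substitution, and I expect no further difficulty beyond assuming enough regularity for the conditional expectation to be continuous.
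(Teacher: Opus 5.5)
Your proof is correct and follows the paper's argument: the paper likewise reads off $b_t(x)=\lim_{s\to t}v_{s,t}(x)=\ell_t x+\lambda_t\psi_{t,t}(x)$ from the mean-denoiser parametrization. It equates this with $b_t(x)=\ell_t x+\lambda_t\mathbb{E}[I_1\mid I_t=x]$, obtained by rewriting $\dot I_t=\ell_t I_t+\lambda_t I_1$; for $\alpha_t=1-t$, $\beta_t=t$ (where $\ell_t=-\lambda_t=-1/(1-t)$) this is exactly your Steps 1--2.

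One caution on your alternative route. The paper's proof of the mean-denoiser proposition itself uses the diagonal identity, since it writes the trajectory ODE with $\psi_{u,u}(x_u)$ in place of $\mathbb{E}[I_1\mid I_u=x_u]$. Taking $s\to t$ there is therefore only non-circular if you rederive the time-averaged formula directly from the drift $b_u(x)=\ell_u x+\lambda_u\mathbb{E}[I_1\mid I_u=x]$.
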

\end{propbox}
Based on this identity, we train $\hat{\psi}_{t,t}$ to predict the target class $I_1$ given the noisy state $I_t$. Since $I_1$ takes values in $\mathcal{V}$, we can minimize the expected cross-entropy loss:
\begin{equation}
    \mathcal{L}_{\text{diag}}(\hat{\psi}) = \int_0^1 \mathbb{E}\left[ -\sum_{k=1}^K I_1^{(k)} \log \hat{\psi}_{t,t}^{(k)}(I_t) \right]dt,
\end{equation}
where the expectation is over the joint distribution of $I_t$ and $I_1$. Any proper scoring rule or Bregman divergence could alternatively be used here. Minimizing this loss ensures that $\hat \psi_{t,t} = \psi_{t,t}$ at optimality.

\subsubsection{Consistency Loss}
To learn a valid flow map $X_{s,t}$, the model must satisfy the consistency constraints in~\eqref{eq:map:rules}. We show here that we can rewrite these fundamental flow identities in terms of the mean denoiser $\psi_{s,t}$.
\begin{propbox}
    \paragraph{Discrete Flow Map Identities.}
    \begin{restatable}{proposition*}{identities}
    \label{prop:identities}
    For any  $0\le s < u < t \le 1$, each of the following identities, in conjunction with~\eqref{eq:md_diagonal}, characterizes $\psi_{s,t}$ uniquely:
    \begin{align}
        \small\text{\textbf{Semigroup:}}\normalsize & \qquad \psi_{s,t}(x) = \alpha_{s,u,t} \psi_{s,u}(x) + \beta_{s,u,t} \psi_{u,t}(X_{s,u}(x)), \\
        \small\text{\textbf{Lagrangian:}}\normalsize & \qquad \psi_{s,t}(x) = \psi_{t,t}(X_{s,t}(x)) - \gamma_{s,t} \partial_t \psi_{s,t}(x),\label{eq:md_lagrangian} \\
        \small\text{\textbf{Eulerian:}}\normalsize & \qquad \partial_s \psi_{s,t}(x) + J_x \psi_{s,t}(x) b_s(x) = \kappa_{s,t} (\psi_{s,t}(x) - \psi_{s,s}(x)), \label{eq:md_eulerian}
    \end{align}
    Here, $J_x$ is the Jacobian with respect to $x$ and we define the coefficients $\alpha_{s,u,t} = \tfrac{(u-s)(1-t)}{(t-s)(1-u)}\in(0,1)$ and $\beta_{s,u,t} = \tfrac{(t-u)(1-s)}{(t-s)(1-u)}\in(0,1)$  that sum to $1$, as well as  $\gamma_{s,t} = \tfrac{(t-s)(1-t)}{1-s}\in(0,1)$ and $\kappa_{s,t} = \tfrac{1-t}{(1-s)(t-s)}>0$.
    \end{restatable}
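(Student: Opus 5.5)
The plan is to derive each identity by substituting the mean denoiser parametrization \eqref{eq:flow_map_and_psi_st} into the corresponding flow map rule in \eqref{eq:map:rules}. Uniqueness then follows from the known fact that each rule, together with the tangent condition \eqref{eq:tangent}, characterizes $X_{s,t}$ uniquely. Since \eqref{eq:mean_denoiser_param} gives a bijection between $v_{s,t}$ and $\psi_{s,t}$ for $s<1$, it also gives a bijection between $X_{s,t}$ and $\psi_{s,t}$. Moreover, \eqref{eq:md_diagonal} is equivalent to $v_{t,t}(x)=(\mathbb{E}[I_1\mid I_t=x]-x)/(1-t)=b_t(x)$. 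This holds because for the linear interpolant $I_1-I_0=(I_1-I_t)/(1-t)$, so $b_t(x)=(\mathbb{E}[I_1\mid I_t=x]-x)/(1-t)$. Hence each $\psi$-identity plus the diagonal condition is equivalent to the matching $X$-rule plus the tangent condition, and uniqueness transfers.

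\textbf{Semigroup.} I will write $y=X_{s,u}(x)=\frac{1-u}{1-s}x+\frac{u-s}{1-s}\psi_{s,u}(x)$. Then
\[
X_{u,t}(y)=\frac{1-t}{1-u}\,y+\frac{t-u}{1-u}\,\psi_{u,t}(y).
\]
Expanding gives $\frac{1-t}{1-s}x+\frac{(1-t)(u-s)}{(1-u)(1-s)}\psi_{s,u}(x)+\frac{t-u}{1-u}\psi_{u,t}(y)$. Equating this with $X_{s,t}(x)$, the $x$-terms cancel. Multiplying by $\frac{1-s}{t-s}$ yields exactly the coefficients $\alpha_{s,u,t}$ and $\beta_{s,u,t}$. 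Their sum equals $1$ by the identity $(u-s)(1-t)+(t-u)(1-s)=(t-s)(1-u)$, and positivity is immediate for $s<u<t<1$.

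\textbf{Lagrangian.} Differentiating \eqref{eq:flow_map_and_psi_st} in $t$ gives
\[
\partial_t X_{s,t}=\frac{\psi_{s,t}-x}{1-s}+\frac{t-s}{1-s}\,\partial_t\psi_{s,t}.
\]
For the right-hand side, $v_{t,t}(X_{s,t})=(\psi_{t,t}(X_{s,t})-X_{s,t})/(1-t)$. Substituting $X_{s,t}$ and simplifying, the term $\frac{\psi_{s,t}-x}{1-s}$ emerges. Indeed, $\psi_{s,t}-X_{s,t}=\frac{1-t}{1-s}(\psi_{s,t}-x)$, so after dividing by $1-t$ it matches the first term of the derivative. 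Cancelling and multiplying by $1-t$ leaves $\psi_{t,t}(X_{s,t})-\psi_{s,t}=\gamma_{s,t}\,\partial_t\psi_{s,t}$.

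\textbf{Eulerian.} I will differentiate \eqref{eq:flow_map_and_psi_st} in $s$ and take its Jacobian in $x$. The $\partial_s$ terms are $\frac{1-t}{(1-s)^2}x-\frac{1-t}{(1-s)^2}\psi_{s,t}+\frac{t-s}{1-s}\partial_s\psi_{s,t}$, and the Jacobian is $\frac{1-t}{1-s}I+\frac{t-s}{1-s}J_x\psi_{s,t}$. Contracting the Jacobian with $b_s=(\psi_{s,s}-x)/(1-s)$, the $x$-terms cancel. What remains is
\[
\frac{t-s}{1-s}\bigl(\partial_s\psi_{s,t}+J_x\psi_{s,t}\,b_s\bigr)=\frac{1-t}{(1-s)^2}\bigl(\psi_{s,t}-\psi_{s,s}\bigr),
\]
which rearranges to the coefficient $\kappa_{s,t}$.

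\textbf{Main obstacle.} The computations are routine. The delicate step is uniqueness: the Lagrangian and Eulerian forms are differential in $t$ or $s$ with singular coefficients as $t\to s$. I would argue uniqueness on the $X$ side, via ODE well-posedness for the Lagrangian rule with $X_{s,s}=\mathrm{id}$, and via transport-equation uniqueness or the semigroup property for the Eulerian rule, assuming $b$ is Lipschitz. I would also note that $\lim_{t\to s}\psi_{s,t}=\psi_{s,s}$ is needed to match the initial condition.
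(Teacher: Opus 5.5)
Your proposal is correct and follows the same route as the paper. The appendix substitutes the parametrization $X_{s,t}(x)=\Gamma_{s,t}x+\Xi_{s,t}\psi_{s,t}(x)$ into the semigroup, Lagrangian and Eulerian rules for a general interpolant $I_t=\alpha_t I_0+\beta_t I_1$ and then specializes to $\alpha_t=1-t$, $\beta_t=t$; your direct linear-case computations, including all four coefficients, check out. The paper never spells out the uniqueness claim, so your transfer via the $v\leftrightarrow\psi$ bijection and the equivalence of \eqref{eq:md_diagonal} with $v_{t,t}=b_t$ is a welcome addition. One small correction: $X_{s,t}\to x$ as $t\to s$ already follows from boundedness of $\psi$, so continuity is not needed for the initial condition. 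The continuity $\psi_{s,t}\to\psi_{s,s}$ is needed only in the semigroup case, where $\psi_{t,t}$ never enters the identity and continuity is what turns \eqref{eq:md_diagonal} into the tangent condition.
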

\end{propbox}
We now convert these geometric identities into tractable training objectives. Our goal is to leverage the discrete geometry of the model to formulate exact losses.

\paragraph{Consistency via Semigroup Loss.}
We would like to enforce the semigroup identity directly on our network $\hat{\psi}$:
\begin{equation}
\label{eq:psd_on_hat_psi}
    \hat{\psi}_{s,t}(x) = \alpha_{s,u,t} \hat{\psi}_{s,u}(x) + \beta_{s,u,t} \hat{\psi}_{u,t}(\hat{X}_{s,u}(x)).
\end{equation}
Since the network output is constrained to the simplex, the right-hand side---a convex combination of probability vectors since $\alpha_{s,u,t},\beta_{s,u,t}\in(0,1)$ and $\alpha_{s,u,t} + \beta_{s,u,t}=1$---defines a valid target distribution. Treating this composite prediction as the teacher, we distill it into the student $\hat{\psi}_{s,t}$ by minimizing the KL divergence:
\begin{equation}
    \mathcal{L}_{\text{PSD}}(\hat{\psi}) = \mathbb{E} \left[ D_{\text{KL}}\left( \text{sg}\left[ \alpha_{s,u,t} \hat{\psi}_{s,u}(I_s) + \beta_{s,u,t} \hat{\psi}_{u,t}(\hat{X}_{s,u}(I_s)) \right] \parallel \hat{\psi}_{s,t}(I_s) \right) \right],
\end{equation}
where $\text{sg}[\cdot]$ denotes the stop-gradient operator and where the expectation is over the law of $I_s$ and a distribution of $s,u,t$ such that $s\leq u\leq t$ with full support. Minimizing this loss enforces~\eqref{eq:psd_on_hat_psi}, which, together with the diagonal condition, ensures that $\hat{X}$ recovers the true target flow map. We note that any divergence $D$ could be used above (including reverse KL instead of forward KL).

\paragraph{Consistency via Lagrangian Loss.}
Alternatively, we can choose a consistency loss that ensures the Lagrangian identity~\eqref{eq:md_lagrangian} holds for our network $\hat \psi_{s,t}$:
\begin{equation}
\label{eq:hat_md_lagrangian}
    \hat \psi_{s,t}(x) = \hat \psi_{t,t}(\hat X_{s,t}(x)) - \gamma_{s,t} \,\partial_t \hat \psi_{s,t}(x).
\end{equation}
Since the true mean denoiser $\psi_{s,t}$ takes values on the simplex, both sides of~\eqref{eq:md_lagrangian} are probability distributions. However, when enforcing this constraint on our neural network $\hat{\psi}_{s,t}$, the right-hand side of~\eqref{eq:hat_md_lagrangian} is not guaranteed to lie on the simplex, since the time derivative $\partial_t \hat \psi_{s,t}$ can push it off. To rigorously enforce the Lagrangian identity, we instead derive an equivalent condition in \emph{logit space} and then recast it to probability distributions.
\begin{propbox}
    \paragraph{Lagrangian Logit Consistency.}
    \begin{restatable}{proposition*}{lagrangianlogit}
    \label{prop:lagrangian_logit}
    Let $\psi_{s,t}(x) = \text{Softmax}(z_{s,t}(x))$ for any logit lift $z_{s,t} : \mathbb{R^K} \to \mathbb{R}^K$. Define the Lagrangian teacher $T^{\text{LSD}}$, operating on mean denoisers, by:
   \begin{equation}
        T_{s,t}^{\mathrm{LSD}}(\psi)(x)
        := \mathrm{Softmax}\!\left(
        z_{t,t}(X_{s,t}(x))
        -\log\!\left(\mathbf{1}+\gamma_{s,t}\Big(\partial_t z_{s,t}(x)-\langle \psi_{s,t}(x),\partial_t z_{s,t}(x)\rangle\mathbf{1}\Big)\right)
        \right).
\end{equation}
    By the shift-invariance of $\mathrm{Softmax}$, $T^{\mathrm{LSD}}(\psi)$ is independent of the chosen lift $z$. Then the Lagrangian identity~\eqref{eq:md_lagrangian} is equivalent to:
    \begin{equation}
    \label{eq:logit_lagrangian}
        \psi_{s,t}(x) = T_{s,t}^{\text{LSD}}(\psi)(x).
    \end{equation}
    \end{restatable}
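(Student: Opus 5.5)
We work coordinatewise with the Jacobian of the softmax. Write $\psi=\psi_{s,t}(x)$, $z=z_{s,t}(x)$ and $\dot z=\partial_t z_{s,t}(x)$. Differentiating $\psi^{(k)}=e^{z^{(k)}}/\sum_j e^{z^{(j)}}$ in $t$ gives the standard identity
\begin{equation*}
\partial_t\psi_{s,t}(x)=\psi\odot\big(\dot z-\langle\psi,\dot z\rangle\mathbf 1\big),
\end{equation*}
where $\odot$ is the entrywise product. Set $c:=\dot z-\langle\psi,\dot z\rangle\mathbf 1$. This centered derivative is invariant under $z\mapsto z+a_{s,t}(x)\mathbf 1$: the shift changes $\dot z$ by $(\partial_t a)\mathbf 1$, and because $\langle\psi,\mathbf 1\rangle=1$ that change cancels in $c$. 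The same shift moves $z_{t,t}$ by a multiple of $\mathbf 1$, which Softmax ignores. Together these two facts prove the claimed lift-independence of $T^{\mathrm{LSD}}$.

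For the equivalence, substitute the derivative formula into the Lagrangian identity \eqref{eq:md_lagrangian} and rearrange to
\begin{equation*}
\psi\odot\big(\mathbf 1+\gamma_{s,t}c\big)=\psi_{t,t}(X_{s,t}(x))=\mathrm{Softmax}\big(z_{t,t}(X_{s,t}(x))\big).
\end{equation*}

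\emph{Forward direction.} Assume \eqref{eq:md_lagrangian}. Softmax outputs are strictly positive, so the right-hand side is positive, and since $\psi>0$ this forces $\mathbf 1+\gamma_{s,t}c>0$ entrywise. The logarithm is therefore well defined, and taking logs gives
\begin{equation*}
\log\psi=z_{t,t}(X_{s,t}(x))-\log\big(\mathbf 1+\gamma_{s,t}c\big)+\lambda\mathbf 1
\end{equation*}
for some scalar $\lambda$. Applying Softmax and using $\mathrm{Softmax}(\log\psi)=\psi$ for $\psi$ in the open simplex yields \eqref{eq:logit_lagrangian}.

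\emph{Converse.} If $\psi=T^{\mathrm{LSD}}_{s,t}(\psi)$, with the logarithm implicitly defined so that $\mathbf 1+\gamma_{s,t}c>0$, then exponentiating gives $\psi\odot(\mathbf 1+\gamma_{s,t}c)=Z\,\psi_{t,t}(X_{s,t}(x))$ for some normalizing constant $Z>0$. Summing the coordinates fixes $Z$:
\begin{equation*}
\sum_k\psi^{(k)}\big(1+\gamma_{s,t}c^{(k)}\big)=1+\gamma_{s,t}\langle\psi,c\rangle=1,
\end{equation*}
because $\langle\psi,c\rangle=\langle\psi,\dot z\rangle-\langle\psi,\dot z\rangle\langle\psi,\mathbf 1\rangle=0$. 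The right side sums to $Z$, so $Z=1$. This recovers the rearranged identity, and hence \eqref{eq:md_lagrangian}.

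\emph{Main obstacle.} The delicate step is the positivity and domain issue: the statement only makes sense where $\mathbf 1+\gamma_{s,t}c>0$. In the forward direction this holds automatically, as shown. In the converse it must be taken as part of the hypothesis that $T^{\mathrm{LSD}}$ is well defined, and this should be stated explicitly. The step $\mathrm{Softmax}(\log\psi)=\psi$ also needs $\psi$ strictly positive. That is automatic for a softmax lift, but it is worth noting for the true $\psi_{s,t}$, where it holds when the conditional expectations in Proposition~\ref{prop:mean_denoiser} have full support, as they do for Gaussian $p_0$.
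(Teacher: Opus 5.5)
Your proof is correct and follows the same route as the paper's: the softmax derivative identity $\partial_t\psi_{s,t}=\psi_{s,t}\odot(\partial_t z_{s,t}-\langle\psi_{s,t},\partial_t z_{s,t}\rangle\mathbf{1})$, substitution into the Lagrangian identity to get $\psi_{t,t}\circ X_{s,t}=\psi_{s,t}\odot(\mathbf{1}+\gamma_{s,t}c)$, then an elementwise logarithm. You go further than the paper's one-line argument by proving lift-independence and by handling the converse, where $\langle\psi,c\rangle=0$ forces the normalizing constant to equal $1$; the paper leaves both of these implicit.
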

\end{propbox}
The Lagrangian teacher $T^{\text{LSD}}$ always outputs a probability distribution and therefore defines a geometrically valid target for training. We minimize the forward KL divergence from the target to the student $\hat{\psi}_{s,t}$:
\begin{equation} 
    \mathcal L_{\text{LSD}}(\hat \psi) =  \mathbb E\left[ D_{\text{KL}}\left( \text{sg}[T^{\text{LSD}}_{s,t}(\hat \psi)(I_s)] \parallel \hat \psi_{s,t}(I_s) \right) \right],
\end{equation}
where the expectation is taken over $\text{Law}(I_s)$ and a distribution over $s,t$ such that $s\leq t$. Minimizing this loss ensures that both Lagrangian consistency rules~\eqref{eq:md_lagrangian} and~\eqref{eq:logit_lagrangian} are satisfied by $\hat \psi_{s,t}$.

\paragraph{Consistency via Eulerian Loss.}
Next, we consider the Eulerian perspective, and enforce~\eqref{eq:md_eulerian} on $\hat \psi_{s,t}$:
\begin{equation}
    \partial_s \hat \psi_{s,t}(x) + J_x \hat \psi_{s,t}(x) b_s(x) = \kappa_{s,t} (\hat \psi_{s,t}(x) - \hat \psi_{s,s}(x)).
\end{equation}
As with the Lagrangian case, we can derive an equivalent condition in logit space.
\begin{propbox}
    \paragraph{Eulerian Logit Consistency.}
    \begin{restatable}{proposition*}{eulerianlogit}
    \label{prop:eulerian_logit}
    Let $\psi_{s,t}(x)=\mathrm{Softmax}(z_{s,t}(x))$ for any logit lift $z_{s,t}:\mathbb{R}^K\to\mathbb{R}^K$. Define the Eulerian teacher $T^{\mathrm{ESD}}$, operating on mean denoisers, by
    \begin{equation}
    T_{s,t}^{\mathrm{ESD}}(\psi)(x)
    := \mathrm{Softmax}\!\left(
    z_{s,s}(x)
    -\log\!\left(\mathbf{1}-\kappa_{s,t}^{-1}\Big(D_s z_{s,t}(x)-\langle \psi_{s,t}(x),D_s z_{s,t}(x)\rangle\mathbf{1}\Big)\right)
    \right).
    \end{equation}
    Here $D_s z_{s,t}=\partial_s z_{s,t}+J_x z_{s,t}\, b_s$ is the total derivative along $b_s$. Then the Eulerian identity is equivalent to
    \begin{equation}
    \label{eq:logit_eulerian}
    \psi_{s,t}(x)=T_{s,t}^{\mathrm{ESD}}(\psi)(x).
    \end{equation}
    \end{restatable}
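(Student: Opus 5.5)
The plan is to rewrite the Eulerian identity~\eqref{eq:md_eulerian} in logit coordinates using the Jacobian of the Softmax, and then solve the resulting equation for $\psi_{s,s}$ in log space. Write $D_s := \partial_s + b_s\cdot\nabla_x$ for the total derivative along the drift. With this notation the left-hand side of~\eqref{eq:md_eulerian} is $D_s\psi_{s,t}(x)$.

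For $\psi = \mathrm{Softmax}(z)$ the chain rule gives, componentwise,
\begin{equation}
D_s\psi_{s,t} = \psi_{s,t}\odot\big(D_s z_{s,t} - \langle \psi_{s,t}, D_s z_{s,t}\rangle\mathbf{1}\big).
\end{equation}
Here $D_s z_{s,t} = \partial_s z_{s,t} + J_x z_{s,t}\, b_s$, as in the statement. Call the centered vector $c := D_s z_{s,t} - \langle\psi_{s,t},D_s z_{s,t}\rangle\mathbf{1}$. If $z$ is replaced by $z + a\mathbf{1}$ for any scalar function $a$, then $D_s z$ shifts by $(D_s a)\mathbf{1}$. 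Because $\sum_k \psi^{(k)}_{s,t} = 1$, this shift is removed by the centering, so $c$ does not depend on the chosen lift. This settles lift-independence of $T^{\mathrm{ESD}}$, since the remaining term $z_{s,s}$ enters only through a Softmax.

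Substituting into~\eqref{eq:md_eulerian} and dividing by $\kappa_{s,t}>0$, the identity becomes
\begin{equation}
\psi_{s,s} = \psi_{s,t}\odot\big(\mathbf{1} - \kappa_{s,t}^{-1} c\big),
\end{equation}
which is a purely algebraic pointwise relation. For the forward direction, assume~\eqref{eq:md_eulerian} holds. Every component of $\psi_{s,s}$ and of $\psi_{s,t}$ is strictly positive, because both are Softmax images. Hence $\mathbf{1}-\kappa_{s,t}^{-1}c$ has positive entries and its logarithm is defined. Taking logs componentwise gives
\begin{equation}
z_{s,t} = z_{s,s} - \log\big(\mathbf{1}-\kappa_{s,t}^{-1}c\big) + \lambda\mathbf{1}
\end{equation}
for some scalar $\lambda$. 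Applying Softmax then yields $\psi_{s,t} = T^{\mathrm{ESD}}_{s,t}(\psi)$.

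For the converse, assume $\psi_{s,t}=T^{\mathrm{ESD}}_{s,t}(\psi)$, with the logarithm implicitly required to be well defined. Then
\begin{equation}
\psi_{s,t}\propto \psi_{s,s}\odot(\mathbf{1}-\kappa_{s,t}^{-1}c)^{-1},
\end{equation}
so $\psi_{s,t}\odot(\mathbf{1}-\kappa_{s,t}^{-1}c) = C\,\psi_{s,s}$ for a scalar $C>0$. Summing over components and using $\langle\psi_{s,t},c\rangle = \langle\psi_{s,t},D_s z\rangle - \langle\psi_{s,t},D_s z\rangle\cdot 1 = 0$, the left side sums to $1$. The right side sums to $C$, so $C=1$. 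This recovers the multiplicative relation, and multiplying back by $\kappa_{s,t}$ gives exactly~\eqref{eq:md_eulerian}.

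The main obstacle is this normalization step together with the positivity requirement. The equivalence genuinely needs the exact cancellation $\langle\psi_{s,t},c\rangle=0$ to fix the constant. It also needs the entries of $\mathbf{1}-\kappa_{s,t}^{-1}c$ to be positive, which I would state as part of the domain on which $T^{\mathrm{ESD}}$ is defined. The argument parallels the one for Proposition~\ref{prop:lagrangian_logit}, with the roles of $z_{t,t}(X_{s,t})$ and $\gamma_{s,t}$ replaced by $z_{s,s}$ and $-\kappa_{s,t}^{-1}$.
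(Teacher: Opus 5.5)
Your proposal is correct and follows the paper's own route. You apply the Softmax chain rule to get $D_s\psi_{s,t}=\psi_{s,t}\odot c$, rearrange the Eulerian identity into $\psi_{s,s}=\psi_{s,t}\odot(\mathbf{1}-\kappa_{s,t}^{-1}c)$, and take elementwise logarithms, absorbing the normalizers into a multiple of $\mathbf{1}$. The paper leaves several steps implicit that you make explicit: the converse direction, the use of $\langle\psi_{s,t},c\rangle=0$ to force the proportionality constant to equal $1$, and the positivity requirement on $\mathbf{1}-\kappa_{s,t}^{-1}c$.
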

\end{propbox}
We minimize the forward KL divergence from the target $T^{\text{ESD}}$ to the student $\hat{\psi}_{s,t}$:
\begin{equation} 
    \mathcal L_{\text{ESD}}(\hat \psi) =  \mathbb E\left[ D_{\text{KL}}\left( \text{sg}[T^{\text{ESD}}_{s,t}(\hat \psi)(I_s)] \parallel \hat \psi_{s,t}(I_s) \right) \right],
\end{equation}
where the expectation is taken over $\text{Law}(I_s)$ and a distribution over $s,t$ such that $s\leq t$. Minimizing this loss ensures that both Eulerian consistency rules~\eqref{eq:md_eulerian} and~\eqref{eq:logit_eulerian} are satisfied by $\hat \psi_{s,t}$.

\section{Algorithmic Details}
We now discuss the main practical choices used in our implementation. We first consider the choice of interpolants and time schedules, including reparameterizations that distribute denoising progress more evenly over time. We then describe conditional variants of the model and the associated guidance mechanisms, before turning to the stabilized logit-space objectives and loss weightings.
\subsection{Interpolants and Schedules}

\paragraph{Time Reparameterization.}
\label{sec:time_reparametrisation}
A useful algorithmic degree of freedom is to reparameterize time so that denoising progress is distributed more evenly along the trajectory. Instead of the linear interpolant $I_t = (1-t) I_0 + t I_1$, we use
\begin{equation}
    I_t = (1-\beta(t)) I_0 + \beta(t) I_1,
\end{equation}
where $\beta : [0,1] \to [0,1]$ is increasing with $\beta(0)=0$ and $\beta(1)=1$. This does not change the endpoints or the underlying noise-to-data path; it only changes how quickly that path is traversed. Concretely, we choose $\beta$ so that $\mathbb{P}(\arg\max(I_t)=\arg\max(I_1))$ increases approximately linearly in the reparameterized time, following the related time-reparameterization ideas of \citet{pynadath2025candihybriddiscretecontinuousdiffusion,sahoo2025diffusion}. In this way, equal time increments correspond to equal gains in identifying the final token, rather than having most decisions concentrated in a narrow part of the trajectory.

When training with the reparameterized schedule $\beta(t)$, it is often preferable to condition the network on $\beta(s)$ and $\beta(t)$ rather than the raw times $s$ and $t$. Using $\beta(t)$ instead of the raw time $t$ can make the network easier to train, since it avoids forcing the model to represent a sharp dependence on $t$. It also makes it easier to use a different reparameterization at sampling or distillation time, which we found useful in practice.

\paragraph{General Interpolants.}
Although we focus on the linear interpolant in the main text, the construction extends directly to a broader class of interpolants. In particular, one may replace $I_t=(1-t)I_0+tI_1$ with a general interpolant of the form 
\begin{equation}
    I_t=\alpha_t I_0+\beta_t I_1,
\end{equation}
with suitable endpoint conditions, and all of the main objects, including the mean denoiser, flow map parameterization, and consistency identities, admit corresponding schedule-dependent forms. We defer the full development to Appendix~\ref{sec:proofs}, where these general formulas are derived and shown to recover the linear case as a special instance.

\paragraph{Position-Dependent Schedules.} A natural extension is to replace the shared interpolant with \emph{position-dependent schedules}. Instead of noising every token at the same rate, we can define for each position $\ell \in \{1,\dots,L\}$ an interpolant
\begin{equation}
    I_t^\ell = (1 - \beta_t)^{(\ell)} I_0^\ell + \beta_t^{(\ell)} I_1^\ell,
\end{equation}
with the schedules ordered so that earlier positions are revealed sooner than later ones, e.g. $\beta_t^{(1)} \ge \beta_t^{(2)} \ge \cdots \ge \beta_t^{(L)}$ for all $t$. In this construction, the model sees a comparatively cleaner prefix and a noisier suffix at intermediate times, introducing an autoregressive bias into an otherwise parallel model. 

\subsection{Conditional Generation and Guidance}
\begin{figure}[t]
    \centering
    \includegraphics[width=0.95\linewidth]{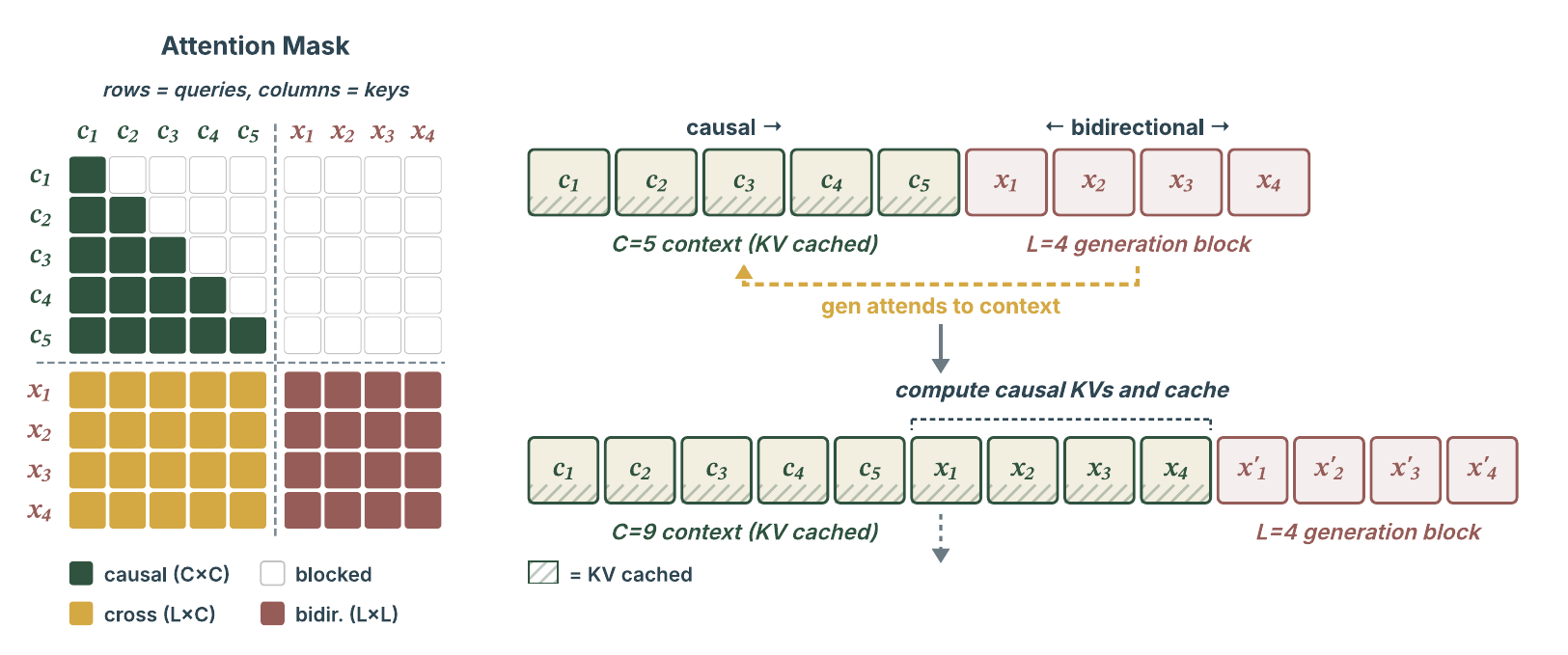}
    \caption{Illustration of block generation with mixed attention masking. A fixed context (green) is processed with causal attention, while a block of future tokens (red) is generated in parallel using bidirectional attention within the block and cross-attention to the context. The attention mask (left) visualizes the masking structure, and the right panel shows how cached key–value (KV) states enable efficient reuse of context while generating successive blocks.}
    \label{fig:block}
\end{figure}
A natural extension of the framework is to train the model conditionally on a prompt or prefix. Given a variable-length context $\mathbf{c}$ and continuation $\mathbf{x} \in \mathcal{V}^L$, we keep the context fixed and apply the flow only to the continuation, thereby modeling the conditional distribution $p_1(\mathbf{x}\mid \mathbf{c})$. In this setting, the model learns to generate a suffix given a clean prefix, rather than producing the entire sequence from scratch.

\paragraph{Block Generation.}
This conditional formulation naturally enables \emph{block generation}. Given a context $\mathbf{c}$, the model can generate an entire block of $L$ future tokens in parallel, append the generated block to the context, and then repeat the process to generate subsequent blocks. In this way, long sequences can be produced through a sequence of parallel block updates rather than fully autoregressive token-by-token decoding.

\paragraph{Classifier-Free and Model Guidance.}
\label{sec:cfg}
The same conditional setup also makes it natural to incorporate \emph{classifier-free guidance} (CFG)~\citep{ho2022classifierfreediffusionguidance}. Concretely, one trains both a conditional model with context $\mathbf{c}$ and an unconditional model in which the context is dropped. For the instantaneous denoiser, this guidance can be expressed directly at the level of the drift. Writing the conditional and unconditional drifts as $b_t(\mathbf{x};\mathbf{c})$ and $b_t(\mathbf{x})$, respectively, we define the guided drift by
\begin{equation}
b_t^{\mathrm{CFG}}(\mathbf{x};\mathbf{c})
:=
b_t(\mathbf{x})
+
\omega \bigl(b_t(\mathbf{x};\mathbf{c}) - b_t(\mathbf{x})\bigr),
\end{equation}
where $\omega \ge 0$ controls the guidance strength. For the linear interpolant, since $b_t(x)=\frac{\psi_{t,t}(x)-x}{1-t}$, this is equivalently viewed as applying CFG directly to the instantaneous denoiser. Accordingly, the same conditional guidance mechanism can then be inherited by the distilled flow map, enabling guided generation at test time in the one-step or few-step setting, commonly referred to as \emph{model guidance}~\citep{tang2025diffusionmodelsclassifierfreeguidance}.

\paragraph{Support Preservation under Guidance.} A key question is whether applying CFG with drift $b_t^{\mathrm{CFG}}$ still yields terminal samples that land on vertices of the simplex, as required for the final outputs to correspond to valid tokens. In fact, this is true: Theorem~3 of \cite{azangulov2026adaptivediffusionguidancestochastic} shows that guided sampling recovers the support of the conditional data distribution. In our setting, since that support is contained in the simplex vertices, it follows that guided generation still terminates at valid discrete tokens.

\subsection{Loss Implementation}

\paragraph{Stable Logit-Space Targets.}
For the logit-space LSD and ESD objectives, a direct implementation of the teacher can be numerically unstable because the correction coefficients become ill-conditioned near the boundary. In particular, for ESD under the linear interpolant, the teacher takes the form
\begin{equation}
    T^{\mathrm{ESD}}_{s,t}(\psi)(x)
    =
    \mathrm{Softmax}\!\left(
        z_{s,s}(x)
        -
        \log\!\left(
            \mathbf{1}
            -
            \kappa_{s,t}^{-1}
            \delta_{s,t}(x)
        \right)
    \right),
\end{equation}
where
\begin{equation}
    \delta_{s,t}(x)
    :=
    D_s z_{s,t}(x)
    -
    \langle \psi_{s,t}(x), D_s z_{s,t}(x)\rangle \mathbf{1},
    \qquad
    \kappa_{s,t}
    =
    \frac{1-t}{(1-s)(t-s)}.
\end{equation}
As $t \to 1$, the factor $\kappa_{s,t}^{-1}=\frac{(1-s)(t-s)}{1-t}$ becomes numerically large, and the analogous coefficient in the LSD correction has the same issue. To avoid forming these unstable ratios explicitly, we rewrite
\begin{equation}
    \mathbf{1}-\kappa_{s,t}^{-1}\delta_{s,t}(x)
    =
    \frac{(1-t)\mathbf{1}-(1-s)(t-s)\delta_{s,t}(x)}{1-t}.
\end{equation}
Since $\mathrm{Softmax}$ is invariant to shifts of the logits by a scalar multiple of $\mathbf{1}$, the scalar denominator contributes only an additive constant in log-space and therefore cancels. This yields the equivalent ESD target
\begin{equation}
T^{\mathrm{ESD}}_{s,t}(\psi)(x) = \mathrm{Softmax}\!\left(
            z_{s,s}(x)
            -
            \log\!\left(
                (1-t)\mathbf{1}-(1-s)(t-s)\delta_{s,t}(x)
            \right)
        \right).
\end{equation}
In practice, this rearrangement avoids the poor conditioning of the correction coefficients near the boundary while leaving the target distribution unchanged; LSD is handled analogously.

\paragraph{Loss Weighting.}
We apply a detached weighting to the loss in order to stabilize optimization. Let $q(x)=\hat{\psi}_{s,t}(x)\in\Delta^{K-1}$ denote the student prediction, and let $p(x)\in\Delta^{K-1}$ denote the teacher target, for example $p(x)=\mathrm{sg}[T_{s,t}(\hat{\psi})(x)]$. We write the simplex mismatch as
\begin{equation}
    \Delta_{s,t}(x) := q(x)-p(x).
\end{equation}
Since the gradient of the forward KL loss $D_{\mathrm{KL}}(p(x)\|q(x))$ with respect to the student logits is exactly $\Delta_{s,t}(x)$, we can control its magnitude by multiplying the loss by a detached scalar weight depending only on $\|\Delta_{s,t}(x)\|_2$. Concretely, we use
\begin{equation}
\label{eq:loss_adaptive}
    w_{s,t}(x)
    :=
    \mathrm{sg}\!\left[
        \bigl(\|\Delta_{s,t}(x)\|_2^2 + c\bigr)^{-r}
    \right],
\end{equation}
with, for example, $c=10^{-6}$ and $r=0.5$, and optimize the weighted objective
\begin{equation}
    \mathcal{L}_{\mathrm{wKL}}
    =
    \mathbb{E}\!\left[
        w_{s,t}(x)\,
        D_{\mathrm{KL}}\bigl(p(x) \,\|\, q(x)\bigr)
    \right].
\end{equation}
Because the weight is detached, it simply rescales the student logit gradient, yielding a more robust KL distillation objective that downweights overly large updates for $r>0$ without changing the optimum.

\section{Experiments}
We evaluate DFMs on the One Billion Word (LM1B) \citep{chelba2014billionwordbenchmarkmeasuring} and OpenWebText (OWT) \citep{Gokaslan2019OpenWeb} datasets. We tokenize LM1B with \textit{bert-base-uncased} and use a sequence length of 128, while for OWT we use the \textit{GPT-2} tokenizer and a sequence length of 1024. 

\paragraph{Training.} In principle, one can either train the flow matching model (i.e. the diagonal) first and then distill using one of the consistency losses, or train the diagonal and off-diagonal jointly via self-distillation. In our experiments, we adopt the former approach. For both datasets, we train the diagonal for 1M steps, and then train the off-diagonal for an additional 200k steps on LM1B and 100k steps on OWT, using both PSD and ESD consistency variants. Across both datasets and training stages, we use a batch size of 512 and the Adam optimizer \citep{kingma2017adam} with a learning rate of $3\times10^{-4}$. For all experiments, we use a 170M parameter diffusion transformer~\citep{peebles2023scalable} with 12 blocks, rotary positional embeddings, and adaptive layer normalization for timestep conditioning, following recent work \citep{sahoo2024simple}. Additional hyperparameter and experimental details are provided in Appendix~\ref{sec:experimental_details}.

\paragraph{Results.}

We compare DFMs against recent accelerated discrete diffusion baselines: Duo with DCD~\citep{sahoo2025diffusion}, MDLM with SDTT~\citep{deschenaux2025autoregressionfastllmsselfdistillation}, and both methods combined with Di4C~\citep{hayakawa2025distillation}. We also compare against concurrent works on Categorical Flow Maps~\citep{roos2026categorical} and Flow Map Language Models (FMLMs)~\citep{lee2026flowmaplanguagemodels}, using the reported results in the latter for all relevant baselines. Following the standard evaluation protocol for non-autoregressive language models, we report generative perplexity (gen. PPL) computed using \textit{GPT-2 Large}~\citep{radford2019language}, together with unigram entropy. Across both datasets, DFMs outperform all baselines in the few-step regime we consider in terms of generative perplexity, while generally preserving diversity. Among the DFM variants, ESD outperforms PSD at 2 and 4 NFEs on both entropy and generative perplexity, although it exhibits mode collapse at 1 NFE. Example generations from DFMs on LM1B are shown in Figure~\ref{fig:samples}.

\begin{table*}[t]
\centering
\footnotesize
\setlength{\tabcolsep}{5pt}
\renewcommand{\arraystretch}{1.15}

\begin{tabular}{llccc|llccc}
\toprule
\multicolumn{5}{c}{\textbf{LM1B}} & \multicolumn{5}{c}{\textbf{OWT}} \\
\cmidrule(lr){1-5} \cmidrule(lr){6-10}
Method & Metric & 1 & 2 & 4 & Method & Metric & 1 & 2 & 4 \\
\midrule

\multirow{2}{*}{Duo + DCD}
& gen.\ PPL $\downarrow$ & 1224.52 & 520.08 & 210.88
& \multirow{2}{*}{Duo + DCD}
& gen.\ PPL $\downarrow$ & 5743.29  & 891.16  & 250.86 \\
& entropy $\uparrow$ & 4.33 & 4.20 & 4.23
& & entropy $\uparrow$& 6.02 & 5.41 & 5.37 \\

\multirow{2}{*}{Duo + Di4C}
& gen.\ PPL & 292.94 & 247.69 & 150.67
& \multirow{2}{*}{Duo + Di4C}
& gen.\ PPL & 370.51  & 210.22 & 154.67 \\
& entropy & 3.79 & 3.87 & 4.00
& & entropy & 3.92 & 4.63 & 4.85 \\

\multirow{2}{*}{MDLM + SDTT}
& gen.\ PPL & 1429.48 & 602.14 & 241.01
& \multirow{2}{*}{MDLM + SDTT}
& gen.\ PPL & 1260.86 & 877.22 & 339.73 \\
& entropy & 4.31 & 4.28 & 4.28
& & entropy & 5.26 & 5.34 & 5.38 \\

\multirow{2}{*}{MDLM + Di4C}
& gen.\ PPL & 1217.10 & 621.59 & 247.32
& \multirow{2}{*}{MDLM + Di4C}
& gen.\ PPL & 1298.80 & 758.23 & 239.27 \\
& entropy & 4.38 & 4.37 & 4.00
& & entropy & 5.29 & 5.35 & 5.40 \\

\multirow{2}{*}{CFM}
& gen.\ PPL & 269.72 & 267.39 & 267.97
& \multirow{2}{*}{CFM}
& gen.\ PPL & -- & -- & -- \\
& entropy & 3.10 & 3.15 & 3.28
& & entropy & -- & -- & -- \\

\multirow{2}{*}{FMLM}
& gen.\ PPL & 119.34 & 110.19 & 98.76
& \multirow{2}{*}{FMLM}
& gen.\ PPL & 168.30 & 133.29 & 111.31 \\
& entropy & 4.16 & 4.21 & 4.21
& & entropy & 5.17 & 5.25 & 5.26 \\

\arrayrulecolor{gray!40}\cmidrule(lr){1-5}\cmidrule(lr){6-10}
\arrayrulecolor{black}

\multirow{2}{*}{\textbf{DFM (PSD)}}
& gen.\ PPL & 94.08 & 87.42 & 78.89
& \multirow{2}{*}{\textbf{DFM (PSD)}}
& gen.\ PPL & 180.29 & 152.83 & 122.32 \\
& entropy & 4.06 & 4.08 & 4.10
& & entropy & 4.91 & 5.03 & 5.10 \\

\multirow{2}{*}{\textbf{DFM (ESD)}}
& gen.\ PPL & 68.11 & 77.60 & 71.53
& \multirow{2}{*}{\textbf{DFM (ESD)}}
& gen.\ PPL & 5.33 & 108.91 & 77.08 \\
& entropy & 3.79 & 4.11 & 4.13
& & entropy & 0.26 & 5.15 & 5.27 \\

\bottomrule
\end{tabular}

\caption{Generative perplexity ($\downarrow$) and entropy ($\uparrow$) across number of function evaluations (NFEs) for LM1B and OWT.}
\label{tab:method-step-metrics}
\end{table*}

In Table~\ref{tab:nfe_combined}, we present performance over a larger range of NFEs for our models, including performance after only training the diagonal, before consistency distillation. This table directly illustrates the effectiveness of consistency training using the PSD and ESD losses, with the resulting few-step sampler significantly outperforming the model obtained from training the diagonal alone. 

\paragraph{Classifier-Free Guidance (CFG).}

\begin{wraptable}[10]{r}{0.43\textwidth}
    \vspace{-1.5\baselineskip}
    \centering
    \small
    \setlength{\tabcolsep}{4pt}
    \renewcommand{\arraystretch}{1.05}
    \begin{tabular}{lcc}
        \toprule
        $\omega$ & gen.\ PPL $\downarrow$ & entropy $\uparrow$ \\
        \midrule
        0.0 & 56.31 & 5.20 \\
        0.5 & 46.78 & 5.06 \\
        1.0 & 36.44 & 4.94 \\
        1.5 & 33.22 & 4.87 \\
        2.0 & 30.98 & 4.81 \\
        \bottomrule
    \end{tabular}
    \caption{gen.\ PPL ($\downarrow$) and entropy ($\uparrow$) across CFG scales, $\omega$. Samples are generated in four blocks of 256 tokens, each with 1024 steps.}
    \label{tab:cfg_metrics}
    \vspace{-0.8\baselineskip}
\end{wraptable}

We next study block-conditional generation on OWT. Following Section~\ref{sec:cfg}, we train the model to generate blocks of 256 tokens conditioned on prompts of previous tokens. At sampling time, this also enables classifier-free guidance (CFG). In this experiment, we consider only the many-step flow model, though it can be readily distilled into a few-step generator. We report results in Table~\ref{tab:cfg_metrics} for guidance scales $\omega \in \{0.0, 0.5, 1.0, 1.5, 2.0\}$, where $\omega=0$ corresponds to unconditional generation and $\omega=1$ to standard conditional generation.

As $\omega$ increases beyond standard conditional generation ($\omega=1$), both generative perplexity and entropy decrease. This is consistent with observations about CFG in continuous domains, such as image generation, where stronger guidance typically improves sample fidelity at the cost of diversity~\citep{ho2022classifierfreediffusionguidance}. Example generations for a range of guidance strengths are presented in Appendix~\ref{sec:example_generations}.

\begin{table*}[t]
\centering
\small
\setlength{\tabcolsep}{5pt}
\renewcommand{\arraystretch}{1.15}
\begin{tabular}{lllcccccccc}
\toprule
Dataset & Stage & Metric & 1 & 2 & 4 & 8 & 16 & 128 & 256 & 1024 \\
\midrule

\multirow{6}{*}{LM1B}
& \multirow{2}{*}{Diagonal}
& gen.\ PPL $\downarrow$ & 2.05 & 69.59 & 204.13 & 124.64 & 101.89 & 75.82 & 72.75 & 68.33 \\
& & entropy $\uparrow$ & 0.76 & 2.87 & 4.37 & 4.32 & 4.27 & 4.19 & 4.17 & 4.16 \\

& \multirow{2}{*}{+ Distillation (PSD)}
& gen.\ PPL & 94.08 & 87.42 & 78.89 & 69.90 & 64.90 & 58.38 & 56.59 & 56.31 \\
& & entropy & 4.06 & 4.08 & 4.10 & 4.10 & 4.11 & 4.10 & 4.10 & 4.11 \\

& \multirow{2}{*}{+ Distillation (ESD)}
& gen.\ PPL & 68.11 & 77.60 & 71.53 & 65.61 & 59.92 & 55.70 & 56.88 & 58.27 \\
& & entropy & 3.79 & 4.11 & 4.13 & 4.13 & 4.13 & 4.11 & 4.12 & 4.12 \\

\midrule

\multirow{6}{*}{OWT}
& \multirow{2}{*}{Diagonal}
& gen.\ PPL & 29.79 & 9.90 & 56.03 & 180.79 & 122.20 & 61.92 & 55.63 & 47.07 \\
& & entropy & 1.55 & 0.91 & 2.84 & 5.20 & 5.52 & 5.28 & 5.22 & 5.12 \\

& \multirow{2}{*}{+ Distillation (PSD)}
& gen.\ PPL & 180.29 & 152.83 & 122.32 & 98.54 & 82.51 & 56.00 & 51.81 & 47.82 \\
& & entropy & 4.91 & 5.03 & 5.10 & 5.11 & 5.09 & 5.00 & 4.97 & 4.97 \\

& \multirow{2}{*}{+ Distillation (ESD)}
& gen.\ PPL & 5.33 & 108.91 & 77.08 & 62.98 & 55.03 & 41.90 & 39.08 & 36.48 \\
& & entropy & 0.26 & 5.15 & 5.27 & 5.23 & 5.18 & 5.04 & 5.00 & 4.95 \\

\bottomrule
\end{tabular}
\caption{Generative perplexity ($\downarrow$) and entropy ($\uparrow$) across number of function evaluations (NFEs).}
\label{tab:nfe_combined}
\end{table*}

\section{Related Work}
\paragraph{Language Models via Continuous Flow and Diffusion.}Various works have explored the use of continuous flow and diffusion models for language generation, with the dominant difference being how discrete data is represented in a continuous space. One approach is to represent language data via learned word or token embeddings  \citep{dieleman2022continuous, li2022diffusion}. \citet{li2022diffusion} introduce \emph{Diffusion-LM}, a diffusion model for language modeling in continuous space.  Word embeddings are  trained jointly with the diffusion model. Guidance via continuous diffusion is shown to enable controllable text generation. Another approach is to represent discrete data via one-hot vectors in continuous space. The support of the data distribution is then constrained to a finite (measure zero) subset of $\mathbb{R}^d$, a fact that can be leveraged in  ways similar to ours (see \cref{eq:mean-denoiser}). For example, \emph{Variational Flow Matching} \citep{eijkelboom2024variational} reframes flow matching in this setting as minimizing a classifier via cross-entropy. Other works represent discrete data as lying on the simplex or other finite-dimensional manifolds, e.g. allowing to construct flows that remain on the simplex  or finite-dimensional manifolds  (not just at time $t=1$ but also for $t<1$)\citep{stark2024dirichlet, davis2024fisher}. Concurrent with our work, \cite{lee2026flowmaplanguagemodels} study a closely related discrete flow-map approach for language modeling. While the two works share similar diagonal training, their off-diagonal treatment focuses on the semigroup/PSD formulation, whereas we additionally develop and emphasize the Eulerian perspective.

Recent work on Categorical Flow Maps \citep{roos2026categorical} has sought to adapt flow maps to discrete data. However, it does not fully exploit the geometric structure of the probability simplex, instead relying on composite loss bounds or inexact objectives. By contrast, our approach places the simplex geometry and exact cross-entropy and KL divergence losses at the center of the framework, which we show leads to improved empirical performance.

\paragraph{Language Models via Discrete Diffusion.} Language models via discrete diffusion models have recently attracted significant attention \citep{austin2021structured, campbell2022continuous,lou2024discrete, sahoo2024simple, shi2024simplified,gat2024discrete, shaul2024flow}. At scale, they have been shown to lead to significant speed-ups  \citep{nie2025large}. Despite the name ``discrete diffusion'', these models are based on discrete-time or continuous-time Markov chains defined on discrete state spaces. While distillation has been explored  \citep{sahoo2025diffusion}, performance of distilled discrete diffusion models remains limited. The underlying limitation is that discrete diffusion models  update each token independently per step. While this functional form is correct for small steps, it is not for large steps amortized in distilled models. Therefore, the limited expressivity of the updates in discrete diffusion models naturally leads to performance degradation. In contrast, our approach here focuses on continuous flow maps that do not suffer from such limited expressivity.

\paragraph{Flow Maps.} Our work is closely related to the  flow maps framework  \citep{boffi2024flow, boffi2025consistency}, whose various loss functions we translate here to modeling of discrete data in continuous spaces. Mean Flows are a reparameterization of flow maps in the average velocity parameterization recently demonstrating state-of-the-art performance \citep{geng2025mean, geng2025improved}. We introduce an equivalent concept of the mean flow here, the mean denoiser. As demonstrated in this work,  the mean denoiser is a simple reparameterization of the flow map naturally suited for modeling discrete data as it preserves convex constraints of the data. Various other works have demonstrated impressive empirical scaling of flow maps in the image and video domain \citep{sabour2025align, zhou2025terminal}, highlighting the potential of scaling up flow map distillation for language further.

\section*{Acknowledgments}
We would like to thank Philippe Rigollet, Grant Rotskoff, Oscar Davis and Nick Boffi for helpful discussions. PP is supported by the EPSRC CDT in Modern Statistics and Statistical Machine Learning [EP/S023151/1], a Google PhD Fellowship, and an NSERC Postgraduate Scholarship (PGS D). AS is supported by the EPSRC CDT in Modern Statistics and Statistical Machine Learning [EP/Y034813/1]. MSA is supported by a Junior Fellowship at the Harvard Society of Fellows as well as the National Science Foundation under Cooperative Agreement PHY-2019786 (The NSF AI Institute for Artificial Intelligence and Fundamental Interactions\footnote{http://iaifi.org/}). This work has been made possible in part by a gift from the Chan Zuckerberg Initiative Foundation to establish the Kempner Institute for the Study of Natural and Artificial Intelligence.

%% file: sections/appendix.tex
\section{Proofs}
\label{sec:proofs}
While the main body focuses on the linear schedule, all results extend to a broader class of interpolants. Therefore, here, we use general time schedules $\alpha_t,\beta_t:[0,1]\to\mathbb{R}$ be $C^1$ with $\alpha_t>0$ for $t<1$ and endpoint constraints $\alpha_0=1,\beta_0=0$ and $\alpha_1=0,\beta_1=1$, and define the stochastic interpolant
\begin{equation}
    I_t=\alpha_t I_0+\beta_t I_1,\qquad I_0\sim p_0,\; I_1\sim p_1.
\end{equation}
All results of this section imply the results of the main paper for the choice of $\alpha_{t}=1-t, \beta_{t}=t$.
\subsection{Mean Denoiser}
\label{subsec:proofs_mean_denoiser}

We derive here the form of the mean denoiser stated in~\eqref{eq:mean-denoiser}. Define the schedule-dependent scalars
\begin{equation}
\label{eq:ell_lambda_def}
    \ell_t:=\partial_t \log \alpha_t = \frac{\dot\alpha_t}{\alpha_t},
    \qquad
    \lambda_t:=\dot\beta_t-\beta_t\frac{\dot\alpha_t}{\alpha_t}
    =\dot\beta_t-\beta_t\ell_t.
\end{equation}
For $s<t$, define
\begin{equation}
\label{eq:Gamma_Delta_def}
    \Gamma_{s,t}:=\frac{\alpha_t}{\alpha_s},
    \qquad
    \Xi_{s,t}:=\beta_t-\Gamma_{s,t}\beta_s.
\end{equation}
Then $\Gamma_{s,t}\Gamma_{t,u}=\Gamma_{s,u}$ and $\Xi_{s,t}=\Gamma_{u,t}\Xi_{s,u}+\Xi_{u,t}$ for $s<u<t$.

\paragraph{Mean Denoiser Parametrization.}
We parametrize the flow map by a simplex-valued predictor $\psi_{s,t}:\mathbb{R}^K\to\Delta^{K-1}$ via
\begin{equation}
\label{eq:general_flowmap_param}
    X_{s,t}(x):=\Gamma_{s,t}x+\Xi_{s,t}\psi_{s,t}(x).
\end{equation}
Equivalently, in residual form $X_{s,t}(x)=x+(t-s)v_{s,t}(x)$ we have
\begin{equation}
\label{eq:v_from_psi_general}
    v_{s,t}(x)
    =\frac{\Gamma_{s,t}-1}{t-s}x+\frac{\Xi_{s,t}}{t-s}\psi_{s,t}(x),
    \qquad
    \psi_{s,t}(x)
    =\frac{x+(t-s)v_{s,t}(x)-\Gamma_{s,t}x}{\Xi_{s,t}}.
\end{equation}
If we set $\alpha_{t}=1-t$ and $\beta_t=t$, then this recovers the parameterization in~\eqref{eq:mean_denoiser_param}.

\begin{propbox}
\paragraph{Interpolant Drift.}
The probability-flow drift satisfies
\begin{equation}
\label{eq:general_drift}
    b_t(x) := \mathbb{E}[\dot I_t\mid I_t=x]
    =\ell_t x+\lambda_t\mathbb{E}[I_1\mid I_t=x].
\end{equation}
\end{propbox}
\begin{proof}
Differentiate $I_t=\alpha_t I_0+\beta_t I_1$ to get $\dot I_t=\dot\alpha_t I_0+\dot\beta_t I_1$.
Rewrite $I_0=(I_t-\beta_t I_1)/\alpha_t$ and substitute:
\begin{align}
    \dot I_t
    &=\frac{\dot\alpha_t}{\alpha_t}I_t+\Big(\dot\beta_t-\beta_t\frac{\dot\alpha_t}{\alpha_t}\Big)I_1\\
    &=\ell_t I_t+\lambda_t I_1.
\end{align}
Taking expectation conditional on $I_t = x$ yields~\eqref{eq:general_drift}.
\end{proof}

\begin{propbox}
\paragraph{Diagonal Identity for $\psi_{s,t}$.}
For any $t\in[0,1]$,
\begin{equation}
\label{eq:general_diagonal_identity}
    \psi_{t,t}(x)=\mathbb{E}[I_1\mid I_t=x].
\end{equation}
\end{propbox}
\begin{proof}
From~\eqref{eq:v_from_psi_general},
\begin{align}
    b_t(x)
    &=\lim_{s\to t}v_{s,t}(x)
    =\lim_{s\to t}\Big(\frac{\Gamma_{s,t}-1}{t-s}x+\frac{\Xi_{s,t}}{t-s}\psi_{s,t}(x)\Big)\\
    &=\ell_t x+\lambda_t\psi_{t,t}(x),
\end{align}
using $\lim_{s\to t}\frac{\Gamma_{s,t}-1}{t-s}=\ell_t$ and $\lim_{s\to t}\frac{\Xi_{s,t}}{t-s}=\lambda_t$.
Equating with~\eqref{eq:general_drift} gives~\eqref{eq:general_diagonal_identity}.
\end{proof}

\begin{propbox}
\paragraph{Mean Denoiser.}
Let $(x_u)_{u\in[s,t]}$ be a trajectory of $\dot x_u=b_u(x_u)$. Then
\begin{equation}
\label{eq:general_mean_denoiser_timeavg}
    \psi_{s,t}(x_s)
    =\int_s^t w_{s,t}(u)\,\mathbb{E}[I_1\mid I_u=x_u]\,du,
    \qquad
    w_{s,t}(u):=\frac{\alpha_t}{\Xi_{s,t}}\frac{\lambda_u}{\alpha_u}.
\end{equation}
Moreover $\int_s^t w_{s,t}(u)\,du=1$. If $\lambda_u/\alpha_u\geq 0$ on $[s,t]$ (equivalently $u\mapsto\beta_u/\alpha_u$ is non-decreasing), then $w_{s,t}$ is a probability density and $\psi_{s,t}(x_s)\in\Delta^{K-1}$.
\end{propbox}
\begin{proof}
By~\eqref{eq:general_drift}, the trajectory obeys
\begin{equation}
\label{eq:traj_general}
\frac{d}{du} x_u=\ell_u x_u+\lambda_u\psi_{u,u}(x_u).
\end{equation}
Thus by chain rule
\begin{equation}
\label{eq:integrating_factor_general}
    \frac{d}{du}\Big(\frac{x_u}{\alpha_u}\Big)
=\frac{\ell_u x_u+\lambda_u\psi_{u,u}(x_u)}{\alpha_{u}}-\frac{\dot{\alpha}_u}{\alpha_u^2}x_u
=\frac{\lambda_u}{\alpha_u}\psi_{u,u}(x_u).
\end{equation}
Integrating from $s$ to $t$ and multiplying with $\alpha_{t}$ gives
\begin{equation}
\label{eq:xt_integral_general}
    x_t=\Gamma_{s,t}x_s+\alpha_t\int_s^t \frac{\lambda_u}{\alpha_u}\psi_{u,u}(x_u)\,du.
\end{equation}
Next, note $\frac{d}{du}(\beta_u/\alpha_u)=\lambda_u/\alpha_u$, hence
\begin{equation}
\label{eq:Delta_as_integral}
    \alpha_t\int_s^t \frac{\lambda_u}{\alpha_u}\,du
    =\alpha_t\Big(\frac{\beta_t}{\alpha_t}-\frac{\beta_s}{\alpha_s}\Big)
    =\beta_t-\frac{\alpha_t}{\alpha_s}\beta_s
    =\Xi_{s,t}.
\end{equation}
Normalize the integral term in~\eqref{eq:xt_integral_general} using~\eqref{eq:Delta_as_integral} to obtain~\eqref{eq:general_mean_denoiser_timeavg}. The normalization $\int_s^t w_{s,t}=1$ follows from~\eqref{eq:Delta_as_integral}.
\end{proof}
Now for $\alpha_{t}=1-t,\beta_t=t$, we obtain
\begin{align}
    w_{s,t}(u)=\frac{1-t}{t-\frac{1-t}{1-s}s}\frac{1+u\frac{1}{1-u}}{1-u}=\frac{1-s}{1}\frac{1-t}{t-s}\frac{1+u\frac{1}{1-u}}{1-u}=\frac{1-s}{(1-u)^2}\frac{1-t}{t-s},
\end{align}
recovering the result from the main paper.

\subsection{Flow Map Identities for Mean Denoiser}
We derive the flow map identities in Section~\ref{prop:identities} for general interpolants.
\begin{propbox}
\paragraph{Semigroup Identity for $\psi_{s,t}$.}
For any $s<u<t$,
\begin{equation}
\label{eq:general_semigroup}
    \psi_{s,t}(x)
    =\omega_{s,u,t}\psi_{s,u}(x)+(1-\omega_{s,u,t})\psi_{u,t}(X_{s,u}(x)),
    \qquad
    \omega_{s,u,t}:=\frac{\Gamma_{u,t}\Xi_{s,u}}{\Xi_{s,t}}.
\end{equation}
Moreover $\omega_{s,u,t}+(1-\omega_{s,u,t})=1$, and if $u\mapsto \beta_u/\alpha_u$ is non-decreasing, then $0\leq \omega_{s,u,t}\leq 1$.
\end{propbox}
\begin{proof}
We use the semigroup property of the flowmap: $X_{s,t}=X_{u,t}\circ X_{s,u}$. Using~\eqref{eq:general_flowmap_param},
\begin{align*}
\Gamma_{s,t}x+\Xi_{s,t}\psi_{s,t}(x)
=&X_{s,t}(x)=X_{u,t}(X_{s,u}(x))=\Gamma_{u,t}(\Gamma_{s,u}x+\Xi_{s,u}\psi_{s,u}(x))+\Xi_{u,t}\psi_{u,t}(X_{s,u}(x)).
\end{align*}
Since $\Gamma_{u,t}\Gamma_{s,u}=\Gamma_{s,t}$, we can substract $\Gamma_{s,t} x$ on both sides and obtain
\begin{equation*}
\Xi_{s,t}\psi_{s,t}(x)=\Gamma_{u,t}\Xi_{s,u}\psi_{s,u}(x)+\Xi_{u,t}\psi_{u,t}(X_{s,u}(x)).
\end{equation*}
Divide by $\Xi_{s,t}$ to get~\eqref{eq:general_semigroup} using $\Xi_{s,t}=\Gamma_{u,t}\Xi_{s,u}+\Xi_{u,t}$. If $u\mapsto \beta_{u}/\alpha_u$ is non-decreasing, then \begin{align}
\frac{\beta_t}{\alpha_t}-\frac{\beta_s}{\alpha_s}&\geq 0\\
\Rightarrow\quad \beta_t-\frac{\beta_s}{\alpha_s}\alpha_{t}&\geq 0\\
\Rightarrow\quad \Xi_{s,t}=\beta_t-\Gamma_{s,t}\beta_{s}&\geq 0
\end{align}
This implies that also $\omega_{s,u,t}\geq 0$ and $1-\omega_{s,u,t} =\frac{\Xi_{u,t}}{\Xi_{s,t}} \geq 0$. 
\end{proof}
Note that for $\alpha_{t}=1-t,\beta_t=t$, we get
\begin{align*}
\omega_{s,u,t}=\frac{\Gamma_{u,t}\Xi_{s,u}}{\Xi_{s,t}}
=\frac{\dfrac{1-t}{1-u}\!\left(u-\dfrac{1-u}{1-s}s\right)}{t-\dfrac{1-t}{1-s}s}&=\frac{\dfrac{1-t}{1-u}\cdot\dfrac{u(1-s)-(1-u)s}{1-s}}{\dfrac{t(1-s)-(1-t)s}{1-s}}\\
&=\frac{\dfrac{1-t}{1-u}\cdot\dfrac{u-s}{1-s}}{\dfrac{t-s}{1-s}}\\
&=\frac{1-t}{1-u}\cdot\frac{u-s}{t-s},
\end{align*}
recovering the semigroup identity from the main paper.

\begin{propbox}
\paragraph{Lagrangian Identity for $\psi_{s,t}$.}
For any $s<t$,
\begin{equation}
\label{eq:general_lagrangian}
    \psi_{s,t}(x)=\psi_{t,t}(X_{s,t}(x))-C_{s,t}\partial_t\psi_{s,t}(x),
    \qquad
    C_{s,t}:=\frac{\Xi_{s,t}}{\lambda_t}.
\end{equation}
\end{propbox}
\begin{proof}
Differentiate~\eqref{eq:general_flowmap_param} in $t$:
\begin{align*}
\partial_t X_{s,t}(x)
&=(\partial_t\Gamma_{s,t})x+(\partial_t\Xi_{s,t})\psi_{s,t}(x)+\Xi_{s,t}\partial_t\psi_{s,t}(x),
\end{align*}
where we used that $X_{s,t}(x)=\Gamma_{s,t}x+\Xi_{s,t}\psi_{s,t}(x)$. Use $\partial_t\Gamma_{s,t}=\ell_t\Gamma_{s,t}$ and 
\begin{align*}
\partial_t\Xi_{s,t}=\dot{\beta}_t-\frac{\dot{\alpha}_t}{\alpha_{s}}\beta_s=\dot{\beta}_t-\frac{\dot{\alpha}_t}{\alpha_{t}}\beta_t+\frac{\dot{\alpha}_t}{\alpha_{t}}\beta_t-\frac{\dot{\alpha}_t}{\alpha_{s}}\beta_s=\lambda_t+\ell_t\Xi_{s,t},
\end{align*}
to obtain
\begin{equation}
\label{eq:lag_simplify}
    \partial_t X_{s,t}(x)=\ell_t X_{s,t}(x)+\lambda_t\psi_{s,t}(x)+\Xi_{s,t}\partial_t\psi_{s,t}(x).
\end{equation}
By tangency, $\partial_t X_{s,t}(x)=b_t(X_{s,t}(x))=\ell_t X_{s,t}(x)+\lambda_t\psi_{t,t}(X_{s,t}(x))$.
Equate with~\eqref{eq:lag_simplify} and rearrange to get~\eqref{eq:general_lagrangian}.
\end{proof}

\begin{propbox}
\paragraph{Eulerian Identity for $\psi_{s,t}$.}
For any $s<t$,
\begin{equation}
\label{eq:general_eulerian}
    \partial_s\psi_{s,t}(x)+J_x\psi_{s,t}(x)b_s(x)
    =\kappa_{s,t}\big(\psi_{s,t}(x)-\psi_{s,s}(x)\big),
    \qquad
    \kappa_{s,t}:=\frac{\Gamma_{s,t}\lambda_s}{\Xi_{s,t}}.
\end{equation}
\end{propbox}
\begin{proof}
Use $\partial_s X_{s,t}(x)+J_x X_{s,t}(x)b_s(x)=0$ and~\eqref{eq:general_flowmap_param}.
Compute $\partial_s\Gamma_{s,t}=-\ell_s\Gamma_{s,t}$ and $\partial_s\Xi_{s,t}=-\Gamma_{s,t}\lambda_s$.
Then
\begin{align}
0&=-\ell_s\Gamma_{s,t}x-\Gamma_{s,t}\lambda_s\psi_{s,t}(x)+\Xi_{s,t}\partial_s\psi_{s,t}(x)
+\Gamma_{s,t}b_s(x)+\Xi_{s,t}J_x\psi_{s,t}(x)b_s(x).
\end{align}
Using $b_s(x)=\ell_s x+\lambda_s\psi_{s,s}(x)$ cancels the $\ell_s$ terms and yields
\begin{equation}
\Xi_{s,t}\big(\partial_s\psi_{s,t}(x)+J_x\psi_{s,t}(x)b_s(x)\big)
=\Gamma_{s,t}\lambda_s\big(\psi_{s,t}(x)-\psi_{s,s}(x)\big),
\end{equation}
which gives~\eqref{eq:general_eulerian}.
\end{proof}

\paragraph{Logit Consistency.}
To derive an equivalent formulation of the consistency identities that is well suited to optimization, we work in logit space and represent
\begin{equation}
    \psi_{s,t}(x)=\mathrm{Softmax}(z_{s,t}(x)).
\end{equation}
\begin{propbox}
\paragraph{Lagrangian Logit Consistency.}
The Lagrangian identity~\eqref{eq:general_lagrangian} is equivalent to
\begin{equation}
\label{eq:general_lagrangian_logit}
    \psi_{s,t}(x)
    =\text{Softmax}\Big(
    z_{t,t}(X_{s,t}(x))
    -\log\big(\mathbf{1}+C_{s,t}(\partial_t z_{s,t}(x)-\langle \psi_{s,t}(x),\partial_t z_{s,t}(x)\rangle\mathbf{1})\big)
    \Big).
\end{equation}
\end{propbox}
\begin{proof}
Differentiate $\psi_{s,t} =\text{Softmax}(z_{s,t})$ in $t$ to get
\begin{equation}
\partial_t\psi_{s,t}=\psi_{s,t}\odot(\partial_t z_{s,t}-\langle \psi_{s,t},\partial_t z_{s,t}\rangle\mathbf{1}).
\end{equation}
Substitute into~\eqref{eq:general_lagrangian} and rearrange elementwise to get
\begin{equation}
    \psi_{t,t} \circ X_{s,t}=\psi_{s,t}\odot(\mathbf{1}+C_{s,t}(\partial_t z_{s,t}-\langle \psi_{s,t},\partial_t z_{s,t}\rangle\mathbf{1})).
\end{equation}
Taking elementwise $\log$ and using $\log\psi_{s,t}=z_{s,t}+c_{s,t}\mathbf{1}$ for some scalar $c_{s,t}$ yields~\eqref{eq:general_lagrangian_logit}.
\end{proof}

\begin{propbox}
\paragraph{Eulerian Logit Consistency.}
The Eulerian identity~\eqref{eq:general_eulerian} is equivalent to
\begin{equation}
\label{eq:general_eulerian_logit}
    \psi_{s,t}(x)
    =\text{Softmax}\Big(
    z_{s,s}(x)
    -\log\big(\mathbf{1}-\kappa_{s,t}^{-1}(D_s z_{s,t}(x)-\langle \psi_{s,t}(x),D_s z_{s,t}(x)\rangle\mathbf{1})\big)
    \Big),
\end{equation}
where $D_s z_{s,t}(x):=\partial_s z_{s,t}(x)+J_x z_{s,t}(x)b_s(x)$.
\end{propbox}
\begin{proof}
Differentiating along $b_s$, we have
\begin{equation}
    D_s\psi_{s,t}=\psi_{s,t}\odot(D_s z_{s,t}-\langle \psi_{s,t},D_s z_{s,t}\rangle\mathbf{1}).
\end{equation}
Substitute into~\eqref{eq:general_eulerian} and rearrange:
\begin{equation}
    \psi_{s,s}=\psi_{s,t}\odot\big(\mathbf{1}-\kappa^{-1}(D_s z_{s,t}-\langle \psi_{s,t},D_s z_{s,t}\rangle\mathbf{1})\big).
\end{equation}
Taking elementwise $\log$ and using $\log\psi_{s,t}=z_{s,t}+c_{s,t}\mathbf{1}$ yields~\eqref{eq:general_eulerian_logit}.
\end{proof}

\subsection{Training objectives for Mean Denoiser}

\paragraph{Diagonal Loss.}
By~\eqref{eq:general_diagonal_identity}, we can train $\hat\psi_{t,t}$ with cross-entropy under the general interpolant:
\begin{equation}
\label{eq:general_Ldiag}
    \mathcal{L}_{\text{diag}}(\hat\psi)
    =\int_0^1 \mathbb{E}\Big[
    -\sum_{k=1}^K I_1^{(k)}\log \hat\psi_{t,t}^{(k)}(I_t)
    \Big]dt,
    \qquad
    I_t=\alpha_t I_0+\beta_t I_1.
\end{equation}

\paragraph{Model Flow Map.}
Given $\hat\psi$, define
\begin{equation}
\label{eq:general_model_flowmap}
    \hat X_{s,t}(x):=\Gamma_{s,t}x+\Xi_{s,t}\hat\psi_{s,t}(x),
    \qquad
    \hat b_s(x):=\ell_s x+\lambda_s\hat\psi_{s,s}(x).
\end{equation}

\paragraph{Consistency via Semigroup Loss (PSD).}
Enforce~\eqref{eq:general_semigroup} by KL distillation with teacher
$T_{\text{PSD}}:=\omega_{s,u,t}\hat\psi_{s,u}(I_s)+(1-\omega_{s,u,t})\hat\psi_{u,t}(\hat X_{s,u}(I_s))$:
\begin{equation}
\label{eq:general_Lpsd}
    \mathcal{L}_{\text{PSD}}(\hat\psi)
    =\iiint_{0\leq s\leq u\leq t\leq 1}
    \mathbb{E}\Big[
\KL\big(\textrm{sg}[T_{\text{PSD}}]\|\hat\psi_{s,t}(I_s)\big)
    \Big]ds\,du\,dt.
\end{equation}

\paragraph{Consistency via Lagrangian Loss (LSD).}
Using~\eqref{eq:general_lagrangian_logit}, define the teacher
\begin{equation}
\label{eq:general_teacher_lsd}
    T_{\text{LSD}}
    :=\text{Softmax}\Big(
    \textrm{sg}\big[
    \hat z_{t,t}(\hat X_{s,t}(I_s))
    -\log\big(\mathbf{1}+C_{s,t}(\partial_t\hat z_{s,t}(I_s)-\langle \hat \psi_{s,t}(I_s), \partial_t \hat z_{s,t}(I_s) \rangle\mathbf{1})\big)
    \big]
    \Big),
\end{equation}
where $\hat\psi_{s,t}=\text{Softmax}(\hat z_{s,t})$. Minimize
\begin{equation}
\label{eq:general_Llsd}
    \mathcal{L}_{\text{LSD}}(\hat\psi)
    =\iint_{0\leq s\leq t\leq 1}
    \mathbb{E}\Big[
\KL\big(T_{\text{LSD}}\|\hat\psi_{s,t}(I_s)\big)
    \Big]ds\,dt.
\end{equation}

\paragraph{Consistency via Eulerian Loss (ESD).}
Using~\eqref{eq:general_eulerian_logit}, define
$D_s\hat z_{s,t}(x):=\partial_s\hat z_{s,t}(x)+J_x\hat z_{s,t}(x)\hat b_s(x)$. The teacher is
\begin{equation}
\label{eq:general_teacher_esd}
    T_{\text{ESD}}
    :=\text{Softmax}\Big(
    \textrm{sg}\big[
    \hat z_{s,s}(I_s)
    -\log\big(\mathbf{1}-\kappa_{s,t}^{-1}(D_s\hat z_{s,t}(I_s)-\langle \hat\psi_{s,t}(I_s),D_s\hat z_{s,t}(I_s)\rangle\mathbf{1})\big)
    \big]
    \Big),
\end{equation}
and the loss is
\begin{equation}
\label{eq:general_Lesd}
    \mathcal{L}_{\text{ESD}}(\hat\psi)
    =\iint_{0\leq s\leq t\leq 1}
    \mathbb{E}\Big[
    \KL\big(T_{\text{ESD}}\|\hat\psi_{s,t}(I_s)\big)
    \Big]ds\,dt.
\end{equation}

\paragraph{Reduction to the Linear Schedule.}
For $\alpha_t=1-t$ and $\beta_t=t$, we have $\Gamma_{s,t}=\frac{1-t}{1-s}$,
$\Xi_{s,t}=\frac{t-s}{1-s}$, $\lambda_t=\frac{1}{1-t}$, and hence
$C_{s,t}=\frac{(t-s)(1-t)}{1-s}$ and $\kappa_{s,t}=\frac{1-t}{(1-s)(t-s)}$,
recovering the linear interpolant coefficients and identities.

\section{Experimental Details}
\label{sec:experimental_details}
In this section, we present further discussion of our experimental settings, and any hyper-parameters. 

\paragraph{Training Details.} We use a batch size of 512 for both the diagonal and distillation stages. We use 2500 warm-up steps and then a constant learning rate of $3\times10^{-4}$. We use the Adam optimizer~\citep{kingma2017adam} with $\beta_1 = 0.9$ and $\beta_2 = 0.999$. Below, we present the hyper-parameters that are specific to different set-ups.

\paragraph{Diagonal.} For diagonal training, we leverage the adaptive loss presented in~\eqref{eq:loss_adaptive} using $r=0.5, c=0.01$.

\paragraph{PSD.} We found it beneficial to use \textit{gradient surgery}  \citep{yu2020gradientsurgerymultitasklearning, zhong2026riemannianmeanflowonestepgeneration}, commonly used for stabilizing multi-objective optimization, when training with PSD. Concretely, we give priority to the gradient of the diagonal loss, and project out the component of the distillation-loss gradient that conflicts with it. Without this gradient projection, we found that optimization is susceptible to collapsing toward degenerate solutions, at the detriment of the diagonal loss and the overall learning procedure. For PSD, we also found it beneficial to use a learnable loss weighting as a function of only $s$, and not both $(s,t)$ as in~\cite{boffi2025consistency}.

\paragraph{ESD.} For ESD, we do not leverage gradient surgery, as it was sufficiently stable without it, and use a learnable loss weighting that is a function of both $(s,t)$.

\paragraph{Noise Schedule.} We found it empirically beneficial to modify the argmax schedule described in the main body (Section~\ref{sec:time_reparametrisation}). Specifically, we take a convex combination $\tilde \beta(t):=\lambda\beta(t) + (1-\lambda)t$, and use $\lambda=0.9$. Intuitively, this combination spends more time in low and high noise regions (near 0 and 1), than the argmax schedule, which is sharper at the endpoints. 

\section{Example Generations}
\label{sec:example_generations}
\begin{figure}[t]
\centering
\begin{figurepanel}

\noindent
\begin{minipage}[t]{0.485\linewidth}
\nfeheader{NFE = 2}

\vspace{8pt}

\begin{samplebox}
\noindent
\begin{minipage}[t]{0.56\linewidth}
\methodtitle{DFM (ESD)}
\end{minipage}\hfill
\begin{minipage}[t]{0.40\linewidth}
\raggedleft \metrics{53.88}{4.05}
\end{minipage}

\vspace{4pt}
\noindent
[CLS] at the time they said they were the foundation for enough children for the missing school. [CLS] the only news is, despite the fact that i still not be seen enough. [CLS] \ " at the same time, the club is often a smallon, but i am m going to have to [CLS] we give our back to our country. [CLS] an he said he ' s now home. [CLS] but i didn ' t take much of that : him to look. [CLS] outside only the 1. 5 down to \$ 6. 2 billion a head. [CLS] next night ' s hearing will be the last of other measures as well to the government [CLS]
\end{samplebox}

\vspace{8pt}

\begin{samplebox}
\noindent
\begin{minipage}[t]{0.56\linewidth}
\methodtitle{DFM (PSD)}
\end{minipage}\hfill
\begin{minipage}[t]{0.40\linewidth}
\raggedleft \metrics{83.64}{4.10}
\end{minipage}

\vspace{4pt}
\noindent
[CLS] what to do they said they want that is for to children for the afghan people. [CLS] the only news is, though the fact that a still not was seen enough. [CLS] \ " at the same time, the weekend is a comhenon, but in that is a major from across on a continent and in the whole country. [CLS] but he said he ' s now home. [CLS] the remit alsos take place yesterday that expected him to look eastwards later than the 1. 5 down to \$ 6. 2 billion were killed. [CLS] so there ' s a return on the advice of other president is well to the government [CLS]
\end{samplebox}

\end{minipage}
\hfill
\begin{minipage}[t]{0.485\linewidth}
\nfeheader{NFE = 1024}

\vspace{8pt}

\begin{samplebox}
\noindent
\begin{minipage}[t]{0.56\linewidth}
\methodtitle{DFM (ESD)}
\end{minipage}\hfill
\begin{minipage}[t]{0.40\linewidth}
\raggedleft \metrics{32.84}{4.07}
\end{minipage}

\vspace{4pt}
\noindent
[CLS] at the time they said they were not caring for enough children for the public school. [CLS] the only news is, despite the fact that i still not be seen enough. [CLS] \ " at the same time, the economy is getting a smallon, but i ' m going to have to on a plane and back to this country. [CLS] but he said he ' s returning home. [CLS] \ " i don ' t think much of that we want to look at it rather than the details about half of that, \ " he told the bbc news. [CLS] next tuesday ' s hearing will be the last of other important issues well to the government [CLS]
\end{samplebox}

\vspace{8pt}

\begin{samplebox}
\noindent
\begin{minipage}[t]{0.56\linewidth}
\methodtitle{DFM (PSD)}
\end{minipage}\hfill
\begin{minipage}[t]{0.40\linewidth}
\raggedleft \metrics{42.09}{4.05}
\end{minipage}

\vspace{4pt}
\noindent
[CLS] at the time they said they were the foundation for enough children for the haitian people. [CLS] the only news is, despite the fact that i still not be seen enough. [CLS] \ " at the same time, the economy is often a smallon, but i ' m going to have gone on and give and back to this country. [CLS] and he said he ' s now home. [CLS] but i didn ' t take much of that of him to look. [CLS] gasoline prices gained 1. 5 cents to \$ 6. 25 a gallon. [CLS] and there ' s a chance that the presence of other candidates is likely to be re [CLS]
\end{samplebox}
\end{minipage}

\end{figurepanel}
\caption{Example LM1B generations using 2 and 1024 function evaluations.}
\label{fig:samples}
\end{figure}
\clearpage

\begin{figure}[t]
\centering
\begin{figurepanel}
\noindent
\nfeheader{$w=0.0$}\hfill
{\color{muted}\shortstack[r]{Gen.\ PPL: \textbf{51.92}\\Entropy: \textbf{5.22}}}
\vspace{8pt}

\begin{samplebox}
\color{juniper!85!black} [END] For making sure you have a good in mind areas of Warcraft, the game may also be pre-free in the first couple of years, with the first option being Bald NPC’s, as well as a maxed player’s added as a bonus bonus. You don’t need much tweaking when you’re ready to played. Embed this thing at j@@@gmail.com for updates. Story continues below ...[END] The former U.S. Army general was accused of being reunited with his Facebook colleagues in Washington next year as a way to secure place in the Middle East, the possibility of Americans using information and a VPN. “I haven’t about it a long time,” Carson while he was in for an interview, said. “It’s kind of just about fear that it’s going to be amazing how many people are going about being in the country,” he said. “We actually knew that these people are going to be working and not everything going – we sat down, because we said ‘no,’ OK.” Carson and the U.S.-led relationship discussed the allegations\color{black}[END]  to police and there is a crack at the ‘right’ in life for drug reasons. It’s interesting because they were here to find out what happened with this incident.” Following the release of the vote, Prime Minister of Canada warned that a real threat was simply the worst part of evidence the government could issue. “The night we saw it was a great incident as how the military was done,” Mr. said Wednesday. “We are saying that the people of the province and the people whose livelihood is paramount. So do what they looked at things that had been approved by the police department, we put them to stand up on the world’s… largest government. “This is the beginning … It is the end of the 1.5th Amendment, which in effect guarantees the right to provide the government accountable for local businesses,” Mr. Mulcair said. The 1.5 starts at \$15million in a 1 p.m. curfew and \$1 hour for Friday in Toronto, one day starting in Toronto and a four-day stay for Thursday.[END] It will no longer be about Clinton or Sanders attending a Democratic rally in Brooklyn, N.Y\color{juniper!85!black} [END] them on winning the truth and with your liker. They, I was reminded of Huntsman’s argument: “When they win the general Cup, when they ask a medal if that’s not good if they love, and if they think they like, I believe, then that’s great. They could earn more votes. I like right your man.” But that, too, is part of Huntsman’s tendency to say “no” what’s best or what lurk. That he doesn’t, and so, what makes I’m afraid “They can’t preparer if they’re just going to be the butt of them to pay us about this and other reason.” But its I mean, as Jesus said and saying, you love the rich and the fans and all our godly invested in those things, talking about what we do with them– and of course else. That is because there is a self God’s just at, and I think it’s okay. And I’m paraphr and for the most part, to go to a foreign city and my country\color{black} [END] help us about the extent of how people feel more comfortable and socialized. Currently we have no data to create a personal model that is part of society. At the moment, suppose you know a consumer what you are putting into its individual lives. It is not necessarily its own identity. However, you are not quite an individual individual, and you can just choose to look it to you. You have a choice if you already know anything. That would be your own evidence. In our response to the very basic associated with our lives, there is no doubt at all that you are in a world where you own much of our wealth. It is a responsibility to society and that is determined by the individual who is more responsible. We are also at a higher risk for the environment in which we live. That is what we make our decisions. You can come up with some work and some money, but less likely depending on time you were for. We probably might initially say this is a our dream, but it takes the actual work from our perspective. Because the participants are at a loss for a ROI or ROI income, that the average one (below) is more valuable. In other words, you can mean the
\end{samplebox}

\end{figurepanel}
\caption{Example OWT generation using CFG ($\omega=0.0$) and block sampling to generate 4 blocks of size 256. A change in text colour denotes a new block.}
\label{fig:samples_owt_0}
\end{figure}

\newpage

\begin{figure}[t]
\centering
\begin{figurepanel}
\noindent
\nfeheader{$w=1.0$}\hfill
{\color{muted}\shortstack[r]{Gen.\ PPL: \textbf{41.32}\\Entropy: \textbf{5.10}}}
\vspace{8pt}
\begin{samplebox}
\color{juniper!85!black} 
[END] For making sure you have a good in mind areas of Warcraft, the game may also be pre-free in the first couple of years, with the first option being Bald NPC’s, as well as a maxed player’s added as a bonus bonus. You don’t need much tweaking when you’re ready to played. Embed this thing at j@@@gmail.com for updates. Story continues below ...[END] The former U.S. Army general was accused of being reunited with his Facebook colleagues in Washington next year as a way to secure place in the Middle East, the possibility of Americans using information and a VPN. “I haven’t about it a long time,” Carson while he was in for an interview, said. “It’s kind of just about fear that it’s going to be amazing how many people are going about being in the country,” he said. “We actually knew that these people are going to be working and not everything going – we sat down, because we said ‘no,’ OK.” Carson and the U.S.-led relationship discussed the allegations \color{black}  between the U.S. team in Washington. “It’s been a far while quiet,” Creamer said, with calls to him out. “Very yeah.” Given the nature of the interview, Carson said it was unlikely that a real chance of having the Russians part of a foreign government could occur. “The media we use it was a great distraction as how the information was leaked,” he said during interviews. “We are saying that a lot of the public and the people whose trust is compromised. So do what they are at (it is) now very clearly and say, ‘100 percent,'” he said. “That’s bad for us, OK.’ So this is the fact that the 1 will eventually give each night.” “The Russians are in good all away,” Mr. Carson said. “A lot of our servers are being down, and our group is only beginning.” “The fact that ISIS had only been in Iraq and Syria, at some point where the leaks came out, appears to be about Clinton or Bernie Sanders,” the U.S. Army \color{juniper!85!black} said. The general and his colleagues like Bitcoin. They only work 100 percent of his wife’s salary if they get the benefit of the 1,000 before they leave. “We are not talking about one guy,” said Mr. Carson, laughing. “It’s great so we could build more quickly. If we vote against it.” But that, too, is part of Cadman’s willingness to talk. “There’s really lots of things about the U.S. talk about, and so, what happened,” another veteran said in Washington. “And people are starting to think they’re just going to be the butt of them to pay us about this and other stuff.” In its closing interview, Facebook News said and saying that you trust the soldier and the one with all our godly confidence in those conversations, thinking about what to do. “A U.S.” is a self Carson’s looking at, and closer to future, in dealing with Mr. Creamer.[END] Dmaine was one of the most popular places to go to a Hillary Clinton and my country \color{black} is very concerned about the importance of being around their medical education and social care. This we have no longer anymore – a national reality that is now becoming forgotten. At the talk, recent about the phrase “we are her into its hands” has some Americans’ own reactions. However, after I was Hillary’s second and voted out of the United States Senate, I still consider it a far-flung presidential campaign. A wayDmaine In our countdown to the 10th president, our nation is there from the U.S. we are in a country where the country was before our election. It is a testament to what the U.S. has taught us since this year. We are also at a loss risk for the environment in which we live. That is what we make our country. After we set up with some work and some money, the less likely vote on U.S. soil like the 8-Ele Partnership is a good reason, but it appears the economy has hit our economy. And the Democrats are at a loss for one of their candidates, Bernie Sanders, is the biggest candidate left in their most against Hillary. In today’s election[END]
\end{samplebox}

\end{figurepanel}
\caption{Example OWT generation using CFG ($\omega=1.0$) and block sampling to generate 4 blocks of size 256. A change in text colour denotes a new block.}
\label{fig:samples_owt_1}
\end{figure}

\newpage

\begin{figure}[t]
\centering
\begin{figurepanel}
\noindent
\nfeheader{$w=2.0$}\hfill
{\color{muted}\shortstack[r]{Gen.\ PPL: \textbf{28.41}\\Entropy: \textbf{4.93}}}
\vspace{8pt}

\begin{samplebox}
\color{juniper!85!black}[END] For making sure you have a good in mind areas of Warcraft, the game may also be pre-free in the first couple of years, with the first option being Bald NPC’s, as well as a maxed player’s added as a bonus bonus. You don’t need much tweaking when you’re ready to played. Embed this thing at j@@@gmail.com for updates. Story continues below ...[END] The former U.S. Army general was accused of being reunited with his Facebook colleagues in Washington next year as a way to secure place in the Middle East, the possibility of Americans using information and a VPN. “I haven’t about it a long time,” Carson while he was in for an interview, said. “It’s kind of just about fear that it’s going to be amazing how many people are going about being in the country,” he said. “We actually knew that these people are going to be working and not everything going – we sat down, because we said ‘no,’ OK.” Carson and the U.S.-led relationship discussed the allegations\color{black}’ claims and others at a press conference. “It’s been a a long time,” Creamer said in the interview to him out. “Prior to Carson, after he was in the middle of the scandal, I’d realized that a real job wasn’t part of a secret government operation.  Photo  “People thought it was a funny thing as opposed just in the beginning,” he said during interview.  “We are saying that a lot of the public and the people whose office is going to actually do what they are hiding (which is going to happen) and say, ‘Oh no,'” he said. “That’s scary for us, and there’s no longer the fact that we’re still at night.”  “The allegations are just been years away,” Mr. Carson said. “A lot of people would just sit down, and this group could create jobs.”  “The fact that Hillary had only been in office and came in November – and I am confident that we will continue to be about Clinton or a Sanders,” the U.S. Army \color{juniper!85!black} said during interviews in the interview at his headquarters on Monday. “There is question that I wouldn’t know if they’ve been the target of the scandal.”  “We are not talking about one party, and this is a major change, because we also knew that’s great so we could build more jobs. If we chimed in.”  But back to November, Carson said he expected Clinton’s emails to talk.  “There’s really lots of things about the U.S. Army about now and beyond,” Clinton’s office said in August. “These people are told now that they’re just going to be too scared of them to pay us about Obama and a Sanders.”  But Creamer said that as Carson said and Creamer talked about the scandal and the one that grew underwhelmingly by his aides, they talked about what to do.  “A Mr.S.” is a self he’s looking at, and hoping to share, in interviews with Mr. Carson. He said he had “one of the most difficult places to go to work alongside Clinton and her country\color{black}.”  If Carson’s Twitter account and a potential email account came to him, he said he wants a Facebook network that has now allowed Americans to share information and use.  “We really don’t know anything about the media, we don’t know the truth, look at these guy’s even gotten you out of there,” he said. “We don’t know anything. That would be your Mr. Why would we go to the media?”  Benarson said the U.S. Army, including Mr. Carson’s emails, and that he used a case to get the U.S. Army in a bind this year.  “There is a legal basis for that?” he said. “Then we make our country.”  Photo  He also said that he would not say if U.S. forces might be targeting military service overseas or who are involved, including Congress, the War on Drugs and Syria.   Creamer said that at a press conference, as she announced many weeks ago, that the State Department would release their findings to all evidence of Mr. Carson’s emails[END]
\end{samplebox}

\end{figurepanel}
\caption{Example OWT generation using CFG ($\omega=2.0$) and block sampling to generate 4 blocks of size 256. A change in text colour denotes a new block.}
\label{fig:samples_owt_2}
\end{figure}

\newpage

\begin{figure}[t]
\centering
\begin{figurepanel}
\noindent
\nfeheader{$w=3.0$}\hfill
{\color{muted}\shortstack[r]{Gen.\ PPL: \textbf{29.74}\\Entropy: \textbf{4.95}}}
\vspace{8pt}

\begin{samplebox}
\color{juniper!85!black}[END] For making sure you have a good in mind areas of Warcraft, the game may also be pre-free in the first couple of years, with the first option being Bald NPC’s, as well as a maxed player’s added as a bonus bonus. You don’t need much tweaking when you’re ready to played. Embed this thing at j@@@gmail.com for updates. Story continues below ...[END] The former U.S. Army general was accused of being reunited with his Facebook colleagues in Washington next year as a way to secure place in the Middle East, the possibility of Americans using information and a VPN. “I haven’t about it a long time,” Carson while he was in for an interview, said. “It’s kind of just about fear that it’s going to be amazing how many people are going about being in the country,” he said. “We actually knew that these people are going to be working and not everything going – we sat down, because we said ‘no,’ OK.” Carson and the U.S.-led relationship discussed the allegations\color{black}’ claims and others at a press conference. “It’s been a a long time,” Creamer said in an email to him out. “Sad to say, after he was in the middle of the Army, I’ve realized that a real VPN wasn’t part of a cyber operation against me.” “People thought it was a great thing as everybody else in the world,” he said during interview. “We are afraid that a lot of the public and the people whose information is going to actually do what they are doing (‘ no,’ and say, ‘no no,'” he said. “That’s scary for us, and there’s no escaping the fact that we’re still ‘how are this going to be a message we are going to walk away,” Mr. Carson said. “A lot of people would just sit down, and this thing could create jobs.” “The fact that these had only been in service and a VPN is – and I think we found we knew you to be about privacy or a VPN,” the U.S. Army \color{juniper!85!black}  general said in a statement shortly before his interview on Facebook. “There was question that I wouldn’t know if they’re being in the world, that they are a VPN if that’s not – if you knew, and this is a bad idea, we believe, then that’s exactly what we could do more easily. If we looked into it.” Carson has posted a tough attack on North Korea’s willingness to talk. “It’s really kind of just that the U.S. cares about, and so, what makes people’s lives” Carson said. “Some people had told them that they’re just going to be the butt of them to pay us about privacy and a VPN.” But Creamer said that as Americans approached and share their stories about the allegations and the fear that people have gone online with his colleagues, they worry about what to do. “A U.S.” is a self-described millionaire who from a decade and goes to future work in Washington. “He said he had always was one of the most dangerous places to go to a cyber station and my country \color{black} is very concerned about the dangers of how people feel,” he said. Carson said he had made it a long ago that he now knows how to read information and use. “We really didn’t make assumptions about the system, we didn’t pay the bills, and I just didn’t even tell that really thing us,” he said. “We didn’t know anything. That would be your toaster. In our world.” According to Carson, Carson and the U.S. Army are at odds. They’re hoping to get their stories through a portal to what the U.S. did in Washington earlier this year. “How is it with them for that?” he said. “They can make our country. You can work up with them.” A less likely asked on U.S. Facebook policy about his military in Washington is a good reason, how he and the U.S told him. Camper said that at a dinner for a meeting five or many weeks ago, that conversation with them would in their office to see what he or each other’s name[END]
\end{samplebox}

\end{figurepanel}
\caption{Example OWT generation using CFG ($\omega=3.0$) and block sampling to generate 4 blocks of size 256. A change in text colour denotes a new block.}
\label{fig:samples_owt_3}
\end{figure}